\newtheorem{theorem}{Theorem}
\newtheorem{corollary}{Corollary}[theorem]
\newtheorem{lemma}{Lemma}
\newtheorem{proposition}{Proposition}
\newcommand{\E}{\text{E}}
\renewcommand{\P}{\text{P}}
\begin{document}
%
\title{Latent Laplacian Maximum Entropy Discrimination \\
	for Detection of High-Utility Anomalies}
%
%
%

\author{Elizabeth~Hou,~\IEEEmembership{Student Member,~IEEE,}
 Kumar~Sricharan, and~Alfred~O. Hero,~\IEEEmembership{Fellow,~IEEE}
\thanks{Elizabeth Hou and Alfred Hero are with the EECS department at University of Michigan, Ann Arbor, MI, Contact email: emhou@umich.edu }
\thanks{Kumar Sricharan is with PARC, a Xerox company, Palo Alto, CA}}

\maketitle

\begin{abstract}
Data-driven anomaly detection methods suffer from the drawback of detecting all instances that are statistically rare, irrespective of whether the detected instances have real-world significance or not. In this paper, we are interested in the problem of specifically detecting anomalous instances that are known to have high real-world utility, while ignoring the low-utility statistically anomalous instances. To this end, we propose a novel method called Latent Laplacian Maximum Entropy Discrimination (LatLapMED) as a potential solution. This method uses the EM algorithm to simultaneously incorporate the Geometric Entropy Minimization principle for identifying statistical anomalies, and the Maximum Entropy Discrimination principle to incorporate utility labels, in order to detect high-utility anomalies. We apply our method in both simulated and real datasets to demonstrate that it has superior performance over existing alternatives that independently pre-process with unsupervised anomaly detection algorithms before classifying.
\end{abstract}

\begin{IEEEkeywords}
anomaly detection, semi-supervised classification, maximum entropy, maximum margin classifier, support vector machines
\end{IEEEkeywords}

%
\IEEEpeerreviewmaketitle

\section{Introduction}

Anomaly detection is a very pervasive problem applicable to a variety of domains including network intrusion, fraud detection, and system failures. It is a crucial task in many applications because failure to detect anomalous activity could result in highly undesirable outcomes. For example, (i) detection of anomalous medical claims is important to identify fraud; (ii) detection of fraudulent credit card transactions is necessary to help prevent identity theft; and (iii) detection of abnormal network traffic is necessary to identify hacking. 

Many techniques have been developed for anomaly detection. These methods can be broadly classified into two categories: (i) rule-based systems, and (ii) statistical data-driven approaches. The rule-based systems are based on domain expertise and look for specific types of anomalies while the data-driven approaches look to identify anomalies by identifying statistically rare patterns. Examples of data-driven methods include parametric methods that assume a known family for the nominal (non-anomalous) distribution and non-parametric methods such as those using unsupervised or semi-supervised support vector machines (SVMs) \cite{NIPS1999_1723, gornitz2013toward} or based on minimum volume set estimation \cite{NIPS2006_3145, Scott:2006:LMV:1248547.1248571, NIPS2011_4287}. 

The advantage of data-driven approaches over rule-based methods is that they can identify novel types of anomalies that are unknown to the domain expert. In the network traffic example, they can be used to identify previously unknown types of network attacks that would not have been detected by rule-based systems. The disadvantage is that sometimes the anomalies, while statistically rare, are not interesting to the domain expert. For instance, the data-driven methods would detect routine monthly backup events due to the high volume of network traffic. 

\subsection{Related work}
\label{sec:related-work}
To identify the domain expert's interests, one could simply have the user label instances as high or low utility through active learning frameworks like the algorithms in \cite{pelleg2004active, stokes2008aladin}, and subsequently use popular supervised or semi-supervised classification methods \cite{breiman2001random, Belkin:2006:MRG:1248547.1248632, chapelle2005semi, jebara2004machine} to discriminate between the high-utility and low-utility instances. The drawback with this approach in contrast to our proposed approach is that these methods do not exploit the following key idea: only statistically rare points can be of high- utility, or equivalently, all nominal points are low-utility. As a result, the existing methods are less successful in detecting high utility instances given the limited number of labeled instances.

To incorporate this idea that nominal points are low-utility, one could pre-identify anomalous/nominal points using a statistical anomaly detection method \cite{NIPS1999_1723, gornitz2013toward, NIPS2006_3145, Scott:2006:LMV:1248547.1248571, NIPS2011_4287}, and subsequently use the instances labeled as nominal by the anomaly detection method as additional nominal labels for the classifier \cite{veeramachaneni2016ai}. However, as we demonstrate in our experimental results, this strategy is not optimal because the detected anomalies are independent of the utility labels that are available. In contrast, our algorithm holistically utilizes the labeled information to accurately detect anomalies, and the detected anomalies to improve utility classification.

A similar approach to ours was taken in \cite{das2016incorporating} where the authors also distinguish between high utility anomalies and low utility statistical outliers by incorporating human expert utility labels (which they acquire using an active learning loop). Their algorithm is set up to ensure that the anomaly scores of all labeled anomalies (high-utility) is higher than a threshold, and the scores of all labeled nominals (low-utility) is lower than that threshold. Another related approach is the Bayesian posterior probability model of \cite{5585637}. Their algorithm makes similar assumptions about anomalous points being far away, in distance, from the nominal points.

We construct our model using the Maximum Entropy Discrimination (MED) \cite{NIPS1999_1733} framework, a variant of the classical minimum relative entropy principle, but with a discriminant function in some of the constraints. By choosing different priors, discriminant functions, or constraints, the MED framework can be used for corrupt measurements \cite{DBLP:conf/icassp/XieNH14}, infinite mixture classifiers \cite{zhu2011infinite}, and Markov networks \cite{zhu2009maximum} among other applications. In our case, we choose to add constraints that are hinge loss style discriminant functions with latent variables and a regularizer on the smoothness of the discriminant function.

\subsection{Proposed Work}

In this paper, we develop a novel method called Latent Laplacian Maximum Entropy Discrimination (LatLapMED) which detects high-utility anomalies that are of interest to the domain expert by exploiting the idea that all high-utility points are statistically rare. We are interested in situations where we have data $ \bm{X} $ of sample size $n$, but their labels $ y_i $, which denote high utility $ (y_i = 1) $ or not $ (y_i = -1) $ are only partially observed. Some of the samples $ \bm{X}_i $ are also anomalous with \textit{latent} variables indicating whether they are $ (\eta_i = 1) $ or are not $ (\eta_i = 0) $. Without loss of generality, we assume the labels are observed for the first $ l << n $ points and that the first $a$ points are anomalous (all labeled points are anomalous so $ l \le a << n $). 

By adding constraints to the MED framework to incorporate partially labeled observations, the subsequent decision boundary will be able to separate the high-utility anomalous points from the other points despite this incomplete information. However the nominal distribution is unknown, so one way to identify anomalies is by using the Geometric Entropy Minimization (GEM) principle \cite{NIPS2006_3145, NIPS2011_4287}. This idea of integrating the GEM principle into the MED framework has been previously studied by \cite{DBLP:conf/icassp/XieNH14}, who look at classifying nominal points in a fully supervised setting. In our algorithm, we use exploit the probabilistic nature of the MED framework and solve it with the EM algorithm so that the E-step estimates the latent variables with GEM and the M-Step maximizes over only the anomalous points. 

\subsubsection{Notation}

The dataset is of size $n$ where a sample $ \bm{X}_i \in \mathbb{R}^p$. For notational simplicity, we assume the first $l$ samples are labeled as high utility $ (y_i = 1) $ or not $ (y_i = -1) $ and the first $a$ points are anomalous with indicator variables $ (\eta_i = 1) $ and the rest are not $ (\eta_i = 0) $. We denote $\text{KL}( \cdot || \cdot)$ to be the Kullback-Leibler divergence, $\P(\cdot)$ and $\P_0(\cdot)$ to be a probability density and prior respectively, $\E(\cdot)$ as the expectation of random variables with respect to their distribution, $\mathcal{I}(\cdot)$ to be an indicator function, $ M(\cdot | \cdot) $ to be a discrimination function, $Z(\cdot)$ to be the partition function or normalizing constant, and $|| \cdot ||_2$ and $|| \cdot ||_F$ to be the $\ell_2$ and Frobenius norm respectively. The following are parameters for: the decision boundary $\bm{\Theta} = \{\bm{\theta}, b\}$, the margin of each labeled sample $ \gamma_i $, and the smoothness of the discrimination function $\lambda$. Their corresponding Lagrange multipliers are $\alpha_i$ and $\beta$. We define the following matrices: $\bm{I}$ as the identity, $\bm{0}$ as a zero vector, $\mathcal{L}$ as the normalized graph Laplacian matrix, $\bm{K}$ as the Gram matrix of a kernel function $k(\cdot, \cdot)$, $\bm{Y}$ as a diagonal matrix of the labels, $\bm{J}$ as a 0-1 expansion matrix, and $\bm{H}$ as a diagonal matrix of the anomaly indicators with $\bm{h}$ as only its non-zero rows. Anything with a ``hat" $\hat{\,\,}$ is an estimator of its true value which has the same symbol, but no ``hat".

The rest of this paper is organized as follows: Section 2 will briefly review the MED framework and discuss constructing maximum margin classifiers with it. Section 3 will propose an additional constraint to incorporate unlabeled points and derive a probabilistic interpretation of the Laplacian SVM. Section 4 will describe the proposed Latent Laplacian MED method, which uses the EM algorithm to simultaneously estimate unobserved anomalous labels and form a utility decision boundary. Section 5 contains simulation results of the performance of our proposed method, an application to a dataset of Reddit subforums, and two applications to datasets of botnet traffic (CTU-13).

\section{Maximum Entropy Discrimination}

Maximum entropy is a classical method of estimating an unknown distribution subject to the expected values of a set of constraints where the expectation is with respect to the unknown distribution. When the prior distribution is not uniform, this can be generalized as minimizing the relative entropy (or Kullback-Leibler divergence). The MED framework \cite{NIPS1999_1733} extends the minimum relative entropy principle to have discriminant power by requiring one of the constraints to be over a parametric family of decision boundaries $ M(\bm{X} | \bm{\Theta}) $. Thus, it creates models that have both the classification robustness of discriminative approaches and the ability to deal with uncertain or incomplete observations of generative approaches.

The basic MED objective function is
\begin{flalign*} 
&\underset{\P(\bm{\Theta}, \bm{\gamma} | \bm{X}, \bm{y})}{\min} \text{KL}\left(\P(\bm{\Theta}, \bm{\gamma} | \bm{X}, \bm{y}) || \P_0(\bm{\Theta}, \bm{\gamma}) \right) \notag \\
& \mathrel{\makebox[\linewidth-3cm]{\text{subject to} }} \notag \\
&\iint \P(\bm{\Theta}, \bm{\gamma}) \, (y_1 M(\bm{X}_1| \bm{\Theta}) - \gamma_1) \, d\bm{\Theta} d\bm{\gamma} \ge 0 \notag \\
&\mathrel{\makebox[\linewidth-3cm]{\vdots}} \\
&\iint \P(\bm{\Theta}, \bm{\gamma}) \, (y_n M(\bm{X}_n| \bm{\Theta}) - \gamma_n) \, d\bm{\Theta} d\bm{\gamma} \ge 0 \notag 
\end{flalign*}
which has solution,
$$ \P(\bm{\Theta}, \bm{\gamma}| \bm{X}, \bm{y}) = \frac{\P_0(\bm{\Theta}, \bm{\gamma})}{Z(\bm{\alpha})} \exp \left\{ \sum_{i=1}^n \alpha_i\left (y_i M(\bm{X}_i | \bm{\Theta}) - \gamma_i \right) \right\} $$
where the rows $ \bm{X}_i \in \mathbb{R}^p $ are samples, $ y_i \in \{-1, 1 \} $ are labels, $ \P_0(\bm{\Theta}, \bm{\gamma}) $ is the joint prior, and $ \bm{\alpha} = [ \alpha_1, ..., \alpha_n ]^T \ge 0 $ are Lagrange multipliers, which can be found by maximizing the negative log partition function $ - \log \left(Z(\bm{\alpha}) \right) $. Because the posterior distribution $ \P(\bm{\Theta}, \bm{\gamma} | \bm{X}, \bm{y}) $ is over the decision and margin parameters $ \bm{\Theta} \text{ and } \bm{\gamma} $, the MED framework gives a distribution of solutions. This gives additional flexibility because the decision rule $ \hat{y}_{i'} = \text{sign}( \iint \P(\bm{\Theta}, \bm{\gamma}| \bm{X}, \bm{y}) M(\bm{X}_{i'} | \bm{\Theta}) d\bm{\gamma} d\bm{\Theta} ) $ is a weighted combination of discriminant functions, and different priors on $ \bm{\gamma} $ can permit different degrees of separability in the classification. If the support of this prior includes negative values, the decision boundary can be found on non-separable data. 

\subsection{Interpretation as a Maximum Margin Classifier} \label{svm}

Specifically in the case when the discriminant function $ M(\bm{X} | \bm{\theta}, b) = \bm{X \theta} + b $ is linear, and the prior distribution is $ \P_0(\bm{\Theta}, \bm{\gamma}) = \P_0(\bm{\theta}) \P_0(b) \prod_{i=1}^n \P_0(\gamma_i ) $ where $ \P_0(\gamma_i) = C e^{-C(1-\gamma_i)} \mathcal{I}(\gamma_i \le 1) $, $ \P_0(\bm{\theta}) \text{ is } N(\bm{0}, \textbf{\emph{I}}) $, and $ \P_0(b) $ is a Gaussian non-informative prior, \cite{NIPS1999_1733} shows that the MED solution is very similar to a support vector machine (SVM). The \textit{maximum a posteriori} (MAP) estimator for $\bm{\theta} $ is $ \sum_{i=1}^n \alpha_i y_i \bm{X}^T_i $ where $ \alpha_i $ maximize $- \log \left(Z(\bm{\alpha}) \right) = $
\begin{flalign*} 
& -\frac{1}{2} \sum_{i=1}^n \sum_{i'=1}^n \alpha_i \alpha_{i'} y_i y_{i'} \bm{X}_i\bm{X}_{i'}^T + \sum_{i=1}^n \left(\alpha_i + \log(1-\frac{\alpha_i }{C}) \right) \\
& \text{subject to } \sum_{i=1}^n y_i \alpha_i = 0 \text{ and } \alpha_i \ge 0 \text{ for all } i 
\end{flalign*}
which has a log barrier term $ \log(1-\alpha_i/C) $ instead of the inequality constraints $ \alpha_i \leq C $ found in the dual form of an SVM. Otherwise the two objective functions are equivalent, so the $ \hat{\alpha}_i $ are roughly the optimal support vectors and would only differ from actual support vectors when the posterior mode lies near the boundary of its support.

The connection between SVMs and Gaussian process classification has been previously studied in many works including \cite{wahba1999support, jaakkola1999probabilistic, smola1998connection, opperwinther, Sollich2002}. The model in \cite{Sollich2002} is also a probabilistic interpretation of an SVM and also uses a MAP estimator with a Gaussian process prior. However, the MED framework is more generalizable and intuitive because we can easily tailor the posterior to have specific properties by narrowing the feasible set of posteriors through additional goodness-of-fit constraints expressed as statistical expectations of fitting errors. In the following sections we will show how the probabilistic interpretation of an SVM can incorporate partially labeled points and latent variables through additional constraints.

\section{MED with Partially Labeled Observations}

In order to incorporate unlabeled points, we use the semi-supervised framework of \cite{Belkin:2006:MRG:1248547.1248632}, which requires the decision boundary to be smooth with respect to the marginal distribution of all the data, $ \mathcal{P}_X $. This is because we assume that unlabeled points have the same label as their labeled neighbors and prefer decision boundaries in low density regions. So we can restrict the choice of posteriors to be one that induces a decision boundary with at least a certain level of expected smoothness by the additional constraint
\begin{flalign*}
\iint \P(\bm{\theta}, \lambda) \left( \int_{x \in \mathcal{M}} || \nabla_{\mathcal{M}} M(\bm{X} | \bm{\theta}) ||_2^2 \, d\mathcal{P}_X - \lambda \right) \, d\bm{\theta} d\lambda \leq 0
\end{flalign*}
where $ \mathcal{M} = \text{supp}(\mathcal{P}_X ) \subset \mathbb{R}^n $ is a compact submanifold, $ \nabla_{\mathcal{M}} $ is the gradient along it, and $ \lambda $ controls the complexity of the decision boundary in the intrinsic geometry of $ \mathcal{P}_X $. Note the bias/intercept term $ b $ does not appear in the constraint.

Since the marginal distribution of the data is unknown, we must approximate the constraint. From \cite{grigor2006heat}, 
\begin{flalign*}
& M(\bm{X} | \bm{\theta})^T \mathcal{L} M(\bm{X} | \bm{\theta}) \rightarrow \int_{x \in \mathcal{M}} || \nabla_{\mathcal{M}} M(\bm{X} | \bm{\theta}) ||_2^2 \, d\mathcal{P}_X
\end{flalign*}
where $ \mathcal{L}$ is the normalized graph Laplacian formed with a heat kernel using all the data. Thus, we define the empirical objective function for this semi-supervised problem as
\begin{flalign*}
& \underset{P(\bm{\theta}, b, \bm{\gamma}, \lambda | \bm{X}, \bm{y})}{\min} \text{KL}\left(\P(\bm{\theta}, b, \bm{\gamma}, \lambda | \bm{X}, \bm{y}) || \P_0(\bm{\theta}, b, \bm{\gamma}, \lambda) \right) \\
& \mathrel{\makebox[\linewidth-3cm]{\text{subject to} }} \\
& \iiint \P(\bm{\theta}, b, \bm{\gamma}) \, (y_i M(\bm{X}_1 | \bm{\theta}, b) - \gamma_1 ) \, d\bm{\theta} db d\bm{\gamma} \ge 0 \notag \\
& \mathrel{\makebox[\linewidth-3cm]{\vdots}} \\
& \iiint \P(\bm{\theta}, b, \bm{\gamma}) \, (y_l M(\bm{X}_l | \bm{\theta}, b) - \gamma_l ) \, d\bm{\theta} db d\bm{\gamma} \ge 0 \notag \\
& \iint \P(\bm{\theta}, \lambda) ( M(\bm{X} | \bm{\theta})^T \mathcal{L}M(\bm{X} | \bm{\theta}) - \lambda ) \, d\bm{\theta} d\lambda \le 0 
\end{flalign*}
which has solution,
\begin{flalign*} 
& \P(\bm{\theta}, b, \bm{\gamma}, \lambda | \bm{X}, \bm{y}) = \frac{\P_0(\bm{\theta}, b, \bm{\gamma}, \lambda)}{Z(\bm{\alpha}, \beta)} \exp \Bigg\{ \\
& \sum_{i=1}^l \alpha_i \left( y_i M(\bm{X}_i | \bm{\theta}, b) - \gamma_i \right) + \beta \left(\lambda - M(\bm{X} | \bm{\theta})^T \mathcal{L} M(\bm{X} | \bm{\theta}) \right) \Bigg\} 
\end{flalign*}
where the $\alpha_i$'s are Lagrange multipliers for the mean goodness-of-fit constraint on $\mathcal M(\bm{X} | \bm{\theta})$ and $ \beta \ge 0 $ is the Lagrange multiplier for the smoothness constraint on $ M(\bm{X} | \bm{\theta}) $. 

\subsection{Laplacian MED as a Maximum Margin Classifier} \label{lapMED}

When one uses a linear discriminant function, the same independent priors as in Section \ref{svm}, but with additional exponential non-informative prior $ \P_0(\lambda) $, the MAP estimator is thus a maximum margin classifier. This estimator is defined as $ \hat{\bm{\theta}} = \sum_{i=1}^l (\bm{I} + 2 \beta \bm{X}^T \bm{L X})^{-1} \bm{X}_i^T y_i \alpha_{i} $ where the Lagrange multipliers $ \bm{\alpha}, \beta$ maximize the negative log partition function $ -\log\left( Z(\bm{\alpha}, \beta) \right) =$
\begin{flalign} \label{lapsvm}
& - \frac{1}{2} \sum_{i=1}^l \sum_{i'=1}^l \alpha_i \alpha_{i'} y_i y_{i'} \bm{X}_i (\textbf{\emph{I}} + 2 \beta \bm{X}^T \bm{L X})^{-1} \bm{X}^T_{i'} \\
& + \sum_{i=1}^l \left(\alpha_i + \log(1-\alpha_i/C) \right) + \log \left(\det(\textbf{\emph{I}} + 2\beta \bm{X}^T \mathcal{L}\bm{X}) \right) \notag 
\end{flalign}
subject to $\sum_{i=1}^l y_i \alpha_i = 0, \alpha_1, \dots, \alpha_l \ge 0, $ and $\beta \ge 0$.
 
Since the smoothness constraint is formulated using the semi-supervised framework of \cite{Belkin:2006:MRG:1248547.1248632}, the above objective function is very similar to their proposed Laplacian SVM (LapSVM). This is more obviously seen by extending \eqref{lapsvm} to nonlinear discriminant functions though a kernel function $ k(\cdot, \cdot) $ and treating $ \beta $ as a fixed parameter to be chosen separately. 

\begin{proposition} \label{lapmed}
Let $ M(\bm{X} | \bm{\theta}, b) = \bm{X \theta} + b $ and $ \P_0(\bm{\theta}, b, \bm{\gamma}, \lambda) = \P_0(\bm{\theta}) \P_0(b) \prod_{i=1}^l \P_0(\gamma_i ) \P_0(\lambda) $ where $ \P_0(\gamma_i) = C e^{-C(1-\gamma_i)} \mathcal{I}(\gamma_i \le 1) $, $ \P_0(\bm{\theta}) \text{ is } N(\bm{0}, \bm{I}) $, and $ \P_0(\lambda) \text{ and } \P_0(b) $ approach exponential and Gaussian non-informative priors. Then for a given parameter $ \beta \ge 0$, the dual problem to maximizing the posterior $ \P(\bm{\theta}, b, \bm{\gamma}, \lambda | \bm{X}, \bm{y}) $ for $ \bm{\theta} $ is
\begin{flalign*}
& \underset{\bm{\alpha}}{\arg\max} \, \sum_{i=1}^l \alpha_i - \frac{1}{2} \bm{\alpha}^T \bm{Y J} \bm{K} ( \bm{I} + 2\beta \mathcal{L}\bm{K})^{-1} \bm{J}^T \bm{Y \alpha} \\
& + \sum_{i=1}^l \log(1-\alpha_i /C) \quad \text{ s.t. } \sum_{i=1}^l y_i \alpha_i = 0, \alpha_1, \dots, \alpha_l \ge 0
\end{flalign*}
where $\bm{K}$ is the Gram matrix of the kernel function, $ \bm{Y} = \emph{diag}(y_1 ,..., y_l) $ and $ \bm{J} = [ \bm{I} \,\, \bm{0} ] $ is a $ l \times n $ expansion matrix. The decision rule in this dual form is $ \hat{y}_{i'} = \emph{sign}\big( k(\bm{X}_{i'}, \bm{X}) ( \bm{I} + 2\beta \mathcal{L}\bm{K})^{-1} \bm{J}^T \bm{Y} \hat{\bm{\alpha}} + \hat{b} \big) $ where $ \hat{b} = \underset{b}{\arg\min} \, \sum_{s \in \{i | \hat{\alpha}_i \neq 0 \} } |(y_s - \hat{y}_s) - b| $ is equivalent to an SVM bias term \cite{Sindhwani:2005:BPC:1102351.1102455}. 
\end{proposition}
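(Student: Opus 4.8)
The plan is to evaluate the partition function $Z(\bm{\alpha},\beta)$ of the stated MED posterior in closed form, take the indicated non-informative limits for $\P_0(b)$ and $\P_0(\lambda)$, and then kernelize the resulting dual. Since the prior factorizes as $\P_0(\bm\theta)\P_0(b)\prod_{i=1}^l\P_0(\gamma_i)\P_0(\lambda)$ and the posterior's exponent is additively separable into a $\bm\gamma$-part, a $\lambda$-part, and a $(\bm\theta,b)$-part, I would first write $Z(\bm\alpha,\beta)=Z_{\bm\theta,b}(\bm\alpha,\beta)\,\big(\prod_{i=1}^l Z_{\gamma_i}(\alpha_i)\big)\,Z_\lambda(\beta)$. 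Each margin factor $Z_{\gamma_i}(\alpha_i)=\int_{-\infty}^1 Ce^{-C(1-\gamma_i)}e^{-\alpha_i\gamma_i}\,d\gamma_i$ evaluates, for $0\le\alpha_i<C$, to a constant times $\tfrac{1}{C-\alpha_i}e^{-\alpha_i}$, so $-\log Z_{\gamma_i}$ contributes exactly the terms $\alpha_i+\log(1-\alpha_i/C)$, as in Section~\ref{svm}; and since $\beta$ is held fixed, $Z_\lambda(\beta)$ is constant in $\bm\alpha$ and may be dropped from the $\bm\alpha$-optimization.

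The core step is the $(\bm\theta,b)$ integral. Here $\bm\theta$ appears quadratically through $-\beta M(\bm X|\bm\theta)^T\mathcal L M(\bm X|\bm\theta)=-\beta\,\bm\theta^T\bm X^T\mathcal L\bm X\bm\theta$ (the intercept is absent from this term, as remarked after the smoothness constraint), linearly through $\sum_{i=1}^l\alpha_i y_i\bm X_i\bm\theta$, and the prior $N(\bm0,\bm I)$ supplies $-\tfrac12\|\bm\theta\|_2^2$; likewise $b$ appears only linearly, with coefficient $\sum_{i=1}^l\alpha_i y_i$. Completing the square in $\bm\theta$ turns this into a Gaussian integral with precision $\bm I+2\beta\bm X^T\mathcal L\bm X$, producing in $-\log Z$ (i)~the quadratic form $-\tfrac12\big(\sum_i\alpha_i y_i\bm X_i\big)(\bm I+2\beta\bm X^T\mathcal L\bm X)^{-1}\big(\sum_{i'}\alpha_{i'}y_{i'}\bm X_{i'}\big)^T$, (ii)~the determinant term $\log\det(\bm I+2\beta\bm X^T\mathcal L\bm X)$, which is $\bm\alpha$-independent, and (iii)~as a by-product, a Gaussian posterior on $\bm\theta$ with mean $\hat{\bm\theta}=(\bm I+2\beta\bm X^T\mathcal L\bm X)^{-1}\sum_{i=1}^l\alpha_i y_i\bm X_i^T$. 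Integrating $b$ against the non-informative Gaussian prior multiplies $Z$ by a factor of the form $\exp\{\tfrac{\sigma^2}{2}(\sum_i\alpha_i y_i)^2\}$ whose $\sigma^2\to\infty$ limit forces $\sum_{i=1}^l\alpha_i y_i=0$ in order for $-\log Z$ to remain finite --- the stated equality constraint. Collecting these pieces and discarding $\bm\alpha$-independent constants reproduces \eqref{lapsvm}.

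To pass to the kernel form, I would replace $\bm X$ by the (possibly infinite-dimensional) feature matrix of $k$, so that $\bm X\bm X^T=\bm K$ is the Gram matrix, and invoke the push-through identity $(\bm I_p+\bm A\bm B)^{-1}\bm A=\bm A(\bm I_n+\bm B\bm A)^{-1}$ with $\bm A=\bm X^T$ and $\bm B=2\beta\mathcal L\bm X$, which gives $(\bm I+2\beta\bm X^T\mathcal L\bm X)^{-1}\bm X^T=\bm X^T(\bm I+2\beta\mathcal L\bm K)^{-1}$. Hence $\bm X_i(\bm I+2\beta\bm X^T\mathcal L\bm X)^{-1}\bm X_{i'}^T=\big[\bm K(\bm I+2\beta\mathcal L\bm K)^{-1}\big]_{ii'}$, and since $i,i'$ run only over the $l$ labeled points, inserting the $l\times n$ expansion matrix $\bm J=[\bm I\;\bm0]$ converts the quadratic form into $\tfrac12\bm\alpha^T\bm Y\bm J\bm K(\bm I+2\beta\mathcal L\bm K)^{-1}\bm J^T\bm Y\bm\alpha$; together with the $\sum_i\alpha_i$ and barrier terms and the discarded constant $\log\det$ term, this is the stated dual objective and constraints. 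For the decision rule, because $M$ is linear and the $\bm\theta$-posterior is Gaussian, $\iint\P(\bm\theta,b,\dots)M(\bm X_{i'}|\bm\theta,b)\,d\bm\theta\,db=\bm X_{i'}\hat{\bm\theta}+\hat b$; applying the same identity to $\bm X_{i'}\hat{\bm\theta}$ and writing $\bm X_{i'}\bm X^T$ as $k(\bm X_{i'},\bm X)$ yields $\hat y_{i'}=\text{sign}\big(k(\bm X_{i'},\bm X)(\bm I+2\beta\mathcal L\bm K)^{-1}\bm J^T\bm Y\hat{\bm\alpha}+\hat b\big)$. Finally, since $b$ was integrated against a non-informative prior it is not pinned down by $Z$; exactly as for an SVM it is set by requiring the margin to be met on the support vectors, i.e. $\hat b=\arg\min_b\sum_{s\in\{i\,:\,\hat\alpha_i\ne0\}}|(y_s-\hat y_s)-b|$, the cited expression.

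I expect the main obstacle to be the core $(\bm\theta,b)$ integral: handling cleanly an integral that is simultaneously quadratic and linear in $\bm\theta$ while taking the non-informative limits in $b$ (and $\lambda$), so that the determinant factor, the posterior mean $\hat{\bm\theta}$, and the constraint $\sum_i y_i\alpha_i=0$ all drop out correctly. A close second is the index bookkeeping in the kernelization step --- checking that the push-through identity places the expansion matrix $\bm J$ on precisely the right sides so that the $l$-dimensional dual variable $\bm\alpha$ couples only to the top-left $l\times l$ block of $\bm K(\bm I+2\beta\mathcal L\bm K)^{-1}$.
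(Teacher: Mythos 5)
Your proposal is correct and follows essentially the same route as the paper's proof: evaluate the partition function factor by factor (margin terms giving $\alpha_i+\log(1-\alpha_i/C)$, the non-informative $b$-limit forcing $\sum_i y_i\alpha_i=0$, the dropped $\lambda$-factor, and the Gaussian $\bm\theta$-integral yielding the quadratic form with precision $\bm I+2\beta\bm X^T\mathcal L\bm X$), then kernelize and read off the decision rule from the posterior mode. The only cosmetic difference is that you kernelize via the push-through identity $(\bm I+2\beta\bm X^T\mathcal L\bm X)^{-1}\bm X^T=\bm X^T(\bm I+2\beta\mathcal L\bm K)^{-1}$, whereas the paper writes the intermediate form $(\bm K^{-1}+2\beta\mathcal L)^{-1}$, which implicitly assumes $\bm K$ invertible; your variant is marginally cleaner but not a different argument.
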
 

Again the log barrier term produces a relaxation of the inequality constraints $ \alpha_i \leq C $ and decreases the objective function if the optimum is near the boundary of the support. The parameters can be written as $ C = \frac{1}{2 l \gamma_A } $ and $ \beta = \frac{\gamma_I }{2 \gamma_A n^2} $ so that they are functions of $ \gamma_A$ and $\gamma_I$, the penalty parameters in the LapSVM for the norms associated with the reproducing kernel Hilbert space (RKHS) and data distribution $ \mathcal{P}_X $ respectively. Due to these similarities, we will call the classifier of Proposition \ref{lapmed} the Laplacian MED (LapMED).

\section{MED with Latent Variables} 

Now that we have established a method to incorporate unlabeled points in MED, we will present a method to also incorporate latent variables. This joint method of simultaneously incorporating unlabeled points and latent variables is our proposed Latent Laplacian MED (LatLapMED) method. We will first consider the case where we can observe the latent variables, so that we have a complete posterior distribution. Then we will derive a lower bound for the observed posterior distribution and discuss how to deal with estimating the latent variables when they are not observed. This will allow us to apply the EM algorithm, which alternates between estimating the latent variables and maximizing the lower bound.

\subsection{The Complete Posterior} \label{sec:complete_post}

If we observe the anomaly indicator variables $ \eta_i $, then we can construct a posterior that depends on these variables by modifying the constraints on mean goodness-of-fit. The discriminant function $ M(\bm{X}_i , \eta_i | \bm{\theta}, b) = \eta_i (\bm{X}_i \bm{\theta} + b)$ can be used to create a maximum margin classifier that gives positive or negative values for anomalous points and zeros for nominal points. This is reasonable because all labeled points are anomalous, so if they are mistakenly classified as nominal, the loss function embedded in the constraints
\begin{flalign} \label{decision}
& \iiint \P(\bm{\theta}, b, \bm{\gamma}) \, (y_i \eta_1 (\bm{X}_i \bm{\theta} + b) - \gamma_1) \, d\bm{\theta} db d\bm{\gamma} \ge 0 \notag \\
& \mathrel{\makebox[\linewidth-3cm]{\vdots}} \\
& \iiint \P(\bm{\theta}, b, \bm{\gamma}) \, (y_l \eta_l (\bm{X}_l \bm{\theta} + b) - \gamma_l ) \, d\bm{\theta} db d\bm{\gamma} \ge 0 \notag 
\end{flalign}
will penalize the labeled points as if they were inside the margin. 

Additionally if some of the anomalous points are not labeled, then we will use the same semi-supervised framework as before and add a smoothness constraint. Since the discriminant function $ M(\bm{X}_i , \eta_i | \bm{\theta}, b) $ will always give zeros for nominal points, it really only needs to be smooth with respect to the marginal distribution of the anomalies $\mathcal{P}_{X_{\eta}}$. Thus because $ \int_{x \in \mathcal{M}} || \nabla_{\mathcal{M}} M(\bm{X} | \bm{\theta}) ||_2^2 \, d\mathcal{P}_X = \int_{x \in \mathcal{M}_{\bm{\eta}}} || \nabla_{\mathcal{M}_{\eta}} M(\bm{X} | \bm{\theta}) ||_2^2 \, d\mathcal{P}_{X_{\eta}}$, there are two choices for the empirical smoothness constraint that converge to the same limit,
\begin{flalign}
& \iint \P(\bm{\theta}, \lambda) ( \bm{\theta}^T \bm{X}^T \bm{H}^T \mathcal{L}\bm{H} \bm{X} \bm{\theta} - \lambda ) \, d\bm{\theta} d\lambda \le 0 \label{smooth_H} \\
& \iint \P(\bm{\theta}, \lambda) ( \bm{\theta}^T \bm{X}^T \bm{h}^T \mathcal{L}_\eta \bm{h} \bm{X} \bm{\theta} - \lambda ) \, d\bm{\theta} d\lambda \le 0 \label{smooth_h}
\end{flalign}
where $ \mathcal{L}_\eta $ is the normalized graph Laplacian of the anomalous points, $\bm{H} = \text{diag}(\bm{\eta})$, and $ \bm{h} $ is a $ a \times n $ submatrix of only the non-zero rows of $ \bm{H} $. 

The solution to the MED problem, using constraints \eqref{decision} and either \eqref{smooth_H} or \eqref{smooth_h}, is a posterior distribution $ \P(\bm{\theta}, b, \bm{\gamma}, \lambda | \bm{X}, \bm{\eta}, \bm{y}) $ and its MAP estimator can also be a maximum margin classifier when the priors are the ones in Proposition \ref{lapmed}. If possible, it is more ideal to use constraint \eqref{smooth_h} because the maximum margin classifier forms a decision boundary with just the $a$ anomalous points; so it takes considerable less computation time than the equivalent classifier using constraint \eqref{smooth_H}. 

\begin{lemma} \label{complete_post}
	Using the same priors as in Proposition \ref{lapmed}, but with discriminant function $ M(\bm{X}_i , \eta_i | \bm{\theta}, b) = \eta_i (\bm{X}_i \bm{\theta} + b)$, the dual problem to maximizing the posterior of the MED problem with constraints \eqref{decision} and \eqref{smooth_h} is maximizing 
\begin{flalign*}
& \underset{\bm{\alpha}}{\arg\max} \, \sum_{i=1}^l \alpha_i -\frac{1}{2} \bm{\alpha}^T \bm{Y J} \bm{K}_\eta( \bm{I} + 2\beta \mathcal{L}_\eta \bm{K}_\eta)^{-1} \bm{J}^T \bm{Y \alpha} \\
& + \sum_{i=1}^l \log(1-\alpha_i / C) \,\, \text{ s.t. } \sum_{i=1}^l \alpha_i y_i = 0, \alpha_1, \dots, \alpha_l \ge 0
\end{flalign*}
where $ \bm{K}_\eta $ is an $ a \times a $ submatrix of the Kernel matrix and $ \bm{J} = [ \bm{I} \,\, \bm{0} ] $ is now a $ l \times a $ expansion matrix.
\end{lemma}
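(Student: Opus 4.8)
The plan is to mirror the derivation of Proposition~\ref{lapmed} essentially verbatim, tracking the only two changes introduced by the lemma: the masking factor $\eta_i$ inside the linear discriminant and the restriction of the Laplacian quadratic form to the anomalous rows via $\bm{h}$. First I would write down the MED posterior associated with constraints \eqref{decision} and \eqref{smooth_h},
$$\P(\bm{\theta},b,\bm{\gamma},\lambda\mid\bm{X},\bm{\eta},\bm{y})\;\propto\;\P_0(\bm{\theta})\P_0(b)\P_0(\lambda)\Big(\textstyle\prod_{i=1}^l\P_0(\gamma_i)\Big)\exp\!\Big\{\sum_{i=1}^l\alpha_i\big(y_i\eta_i(\bm{X}_i\bm{\theta}+b)-\gamma_i\big)+\beta\big(\lambda-\bm{\theta}^T\bm{X}^T\bm{h}^T\mathcal{L}_\eta\bm{h}\bm{X}\bm{\theta}\big)\Big\},$$
and immediately observe that every labeled point is anomalous, so $\eta_i=1$ for $i\le l$ and the loss terms collapse to $\alpha_i\big(y_i(\bm{X}_i\bm{\theta}+b)-\gamma_i\big)$ exactly as in Proposition~\ref{lapmed}. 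The one genuinely new object is the precision contribution $2\beta\bm{X}^T\bm{h}^T\mathcal{L}_\eta\bm{h}\bm{X}$ replacing $2\beta\bm{X}^T\mathcal{L}\bm{X}$.

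Next I would compute the partition function $Z(\bm{\alpha},\beta)$ factor by factor, reusing the computations behind \eqref{lapsvm}. The $\bm{\theta}$-integral is Gaussian with precision $\bm{A}=\bm{I}+2\beta\bm{X}^T\bm{h}^T\mathcal{L}_\eta\bm{h}\bm{X}$ and linear term $\sum_i\alpha_i y_i\bm{X}_i^T$, which after completing the square contributes $-\tfrac12\sum_{i,i'}\alpha_i\alpha_{i'}y_iy_{i'}\bm{X}_i\bm{A}^{-1}\bm{X}_{i'}^T$ to $-\log Z$, together with a $\log\det$ term that is dropped once $\beta$ is held fixed. The truncated-exponential $\gamma_i$-integrals each evaluate to $e^{-\alpha_i}/(1-\alpha_i/C)$ for $\alpha_i<C$, giving the $\alpha_i+\log(1-\alpha_i/C)$ terms; the improper Gaussian $b$-integral forces, in the limit, $\sum_{i=1}^l\alpha_i y_i\eta_i=\sum_{i=1}^l\alpha_i y_i=0$; and the improper exponential $\lambda$-integral is handled exactly as in Proposition~\ref{lapmed} with $\beta$ treated as a fixed parameter. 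Maximizing $-\log Z(\bm{\alpha},\beta)$ thus yields the stated objective, except with the data-space form $\bm{X}_i\bm{A}^{-1}\bm{X}_{i'}^T$ in place of the kernel form.

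The remaining step is the kernelization. Writing $\bm{X}_\eta=\bm{h}\bm{X}$ for the $a\times p$ matrix of anomalous samples and using that the labeled rows are precisely the first $l$ rows of $\bm{X}_\eta$, so $[\bm{X}_1;\dots;\bm{X}_l]=\bm{J}\bm{X}_\eta$ with $\bm{J}=[\bm{I}\ \bm{0}]$ the $l\times a$ expansion matrix, the quadratic term becomes $\bm{\alpha}^T\bm{Y}\bm{J}\big(\bm{X}_\eta\bm{A}^{-1}\bm{X}_\eta^T\big)\bm{J}^T\bm{Y}\bm{\alpha}$. Applying the push-through identity $(\bm{I}+\bm{U}\bm{V})^{-1}\bm{U}=\bm{U}(\bm{I}+\bm{V}\bm{U})^{-1}$ with $\bm{U}=\bm{X}_\eta^T$ and $\bm{V}=2\beta\mathcal{L}_\eta\bm{X}_\eta$ gives $\bm{X}_\eta\bm{A}^{-1}\bm{X}_\eta^T=\bm{X}_\eta\bm{X}_\eta^T(\bm{I}+2\beta\mathcal{L}_\eta\bm{X}_\eta\bm{X}_\eta^T)^{-1}=\bm{K}_\eta(\bm{I}+2\beta\mathcal{L}_\eta\bm{K}_\eta)^{-1}$, where in the linear case $\bm{K}_\eta=\bm{X}_\eta\bm{X}_\eta^T$; the general $k(\cdot,\cdot)$ case follows by the same representer-type substitution used in Proposition~\ref{lapmed}, with $\bm{K}_\eta$ the $a\times a$ Gram submatrix on the anomalous points. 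This is exactly the dual in the statement, and the decision rule follows from the MAP estimator $\hat{\bm{\theta}}$ by the same manipulation.

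I expect the main obstacle to be bookkeeping rather than any real difficulty: one must keep straight the two distinct selection matrices — $\bm{h}$ (size $a\times n$, selecting anomalous points, entering the smoothness/precision term $\bm{A}$) versus $\bm{J}$ (size $l\times a$, selecting the labeled subset of the anomalous points, entering the loss terms) — and verify that it is the $\eta_i$ factor in the discriminant that makes $\bm{A}$ depend on the data only through $\bm{h}\bm{X}$, so that the push-through reduction lands on the $a\times a$ matrices $\bm{K}_\eta$ and $\mathcal{L}_\eta$. The limiting arguments for the improper priors on $b$ and $\lambda$ are identical to those already invoked for Proposition~\ref{lapmed} and need not be repeated.
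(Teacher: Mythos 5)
Your proposal is correct and follows the paper's own derivation almost step for step: both arguments reduce to the computation in Proposition \ref{lapmed}, use the fact that all labeled points are anomalous (so $\eta_i=1$ for $i\le l$, i.e.\ $\bm{J}\bm{H}=\bm{J}$) to leave the bias, margin, and smoothness integrals essentially unchanged, and isolate the modified Gaussian weight integral with precision $\bm{I}+2\beta\bm{X}^T\bm{h}^T\mathcal{L}_\eta\bm{h}\bm{X}$. The one place you genuinely diverge is the reduction to $a\times a$ matrices: you restrict to $\bm{X}_\eta=\bm{h}\bm{X}$ first and apply the push-through identity $(\bm{I}+\bm{U}\bm{V})^{-1}\bm{U}=\bm{U}(\bm{I}+\bm{V}\bm{U})^{-1}$, whereas the paper first writes the full $n\times n$ kernelized form $\bigl(2\beta\bm{h}^T\mathcal{L}_\eta\bm{h}+\bm{K}^{-1}\bigr)^{-1}\odot\bm{\eta}\bm{\eta}^T$, permutes so the anomalous indices come first, and extracts the top-left block by block matrix inversion to get $(2\beta\mathcal{L}_\eta+\bm{K}_\eta^{-1})^{-1}=\bm{K}_\eta(\bm{I}+2\beta\mathcal{L}_\eta\bm{K}_\eta)^{-1}$. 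Your route is marginally cleaner in that it never invokes $\bm{K}^{-1}$ or $\bm{K}_\eta^{-1}$ and so does not implicitly require the Gram matrix to be invertible; the paper's route makes explicit the collapse from $n\times n$ to $a\times a$ operations, which is the computational point emphasized in the surrounding discussion. Both land on the stated dual and on the same primal--dual relation $\bm{X}\hat{\bm{\theta}}=\bm{K}_\eta(\bm{I}+2\beta\mathcal{L}_\eta\bm{K}_\eta)^{-1}\bm{J}^T\bm{Y}\hat{\bm{\alpha}}$.
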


Now, we consider the more realistic scenario where the anomaly indicator variables are latent. Because $ \mathcal{L}_\eta $ depends on both $\bm{X}$ and $\bm{\eta} $, it is simpler to use constraint \eqref{smooth_H} to derive a posterior. Additionally, the posterior distribution is no longer concave and thus difficult to maximize so we will derive a lower bound to maximize instead.

\subsection{A Lower Bound} 

Since the anomaly indicator variables $\eta_i$ are not actually observable, the posterior distribution we can observe is of the form 
\begin{flalign} \label{obvs_post}
\hspace{-.25cm} \P(\bm{\theta}, b, \bm{\gamma}, \lambda | \bm{X}, \bm{y}) = \frac{ \P_0(\bm{\theta}, b, \bm{\gamma}, \lambda) \sum \P(\bm{X}, \bm{\eta}, \bm{y}| \bm{\theta}, b, \bm{\gamma}, \lambda)}{\sum \P(\bm{X}, \bm{\eta} , \bm{y} | \bm{\alpha}) } \hspace{-.1cm} 
\end{flalign}
where the summation $\sum$ is over all $\eta_i \in \{0, 1\}$. So, we need a lower bound for the negative log expected partition function $ -\log \left( \sum \P(\bm{X}, \bm{\eta}, \bm{y} | \bm{\alpha}) \right) $ that is practical to maximize. 

\begin{lemma} \label{lowerbound}
Let \eqref{obvs_post} be the posterior of the MED problem with constraints \eqref{decision} and \eqref{smooth_H}, then using the same assumptions as Lemma \ref{complete_post}, the dual problem to MAP estimation is maximizing $ -\log \left( \sum \P(\bm{X}, \bm{\eta}, \bm{y} | \bm{\alpha}) \right) $ for $ \bm{\alpha} $. This objective has a lower bound proportional to
\begin{flalign*} 
& \sum_{i=1}^l \alpha_i -\frac{1}{2} \bm{\alpha}^T \bm{Y J} \bm{K} \left( \bm{I} + 2 \beta \E_{\eta} ( \bm{H}^T \mathcal{L}\bm{H}) \bm{K} \right)^{-1} \bm{J}^T \bm{Y \alpha} \\
& + \sum_{i=1}^l \log(1-\alpha_i / C) \,\, \text{ s.t. } \sum_{i=1}^l \alpha_i y_i = 0, \alpha_1, \dots, \alpha_l \ge 0
\end{flalign*}
where $\E_\eta$ is the expectation with respect to $ \P(\bm{\eta} | \bm{X}, \bm{y}, \bm{\alpha}^{t-1}) $ and $ \bm{\alpha}^{t-1} $ are the optimal Lagrange multipliers of the previous iteration.
\end{lemma}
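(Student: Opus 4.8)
\emph{Proof plan.} The quantity to be established is the $Q$-function of an EM iteration, so the plan is to construct it as the standard variational (Jensen) bound, with the distribution over the latent $\bm\eta$ taken to be the E-step posterior from the previous iterate. I would first write the observed partition function in \eqref{obvs_post} as $\sum_{\bm\eta}\P(\bm X,\bm\eta,\bm y\mid\bm\alpha)=\int\P_0(\bm\theta,b,\bm\gamma,\lambda)\big(\sum_{\bm\eta}\exp\{\cdots\}\big)\,d\bm\theta\,db\,d\bm\gamma\,d\lambda$ (Fubini), where the exponent is that of the complete MED solution of Section~\ref{sec:complete_post} under the decision constraints \eqref{decision} and the smoothness constraint \eqref{smooth_H}. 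The structural observation that drives the argument is that, because every labeled point is anomalous ($\eta_i=1$ for $i\le l$), the discriminant terms $y_i\eta_i(\bm X_i\bm\theta+b)$ in \eqref{decision} collapse to $y_i(\bm X_i\bm\theta+b)$ and carry \emph{no} dependence on $\bm\eta$; the only $\bm\eta$-dependent piece of the exponent is the quadratic smoothness term $-\beta\,\bm\theta^{T}\bm X^{T}\bm H^{T}\mathcal{L}\bm H\bm X\bm\theta$, which is \emph{linear} in the matrix $\bm H^{T}\mathcal{L}\bm H$ at fixed $\bm\theta$.

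Next I would interchange $\sum_{\bm\eta}$ with the integral and apply Jensen's inequality to $\sum_{\bm\eta}\exp\{-\beta\,\bm\theta^{T}\bm X^{T}\bm H^{T}\mathcal{L}\bm H\bm X\bm\theta\}$ after inserting the weights $q(\bm\eta)=\P(\bm\eta\mid\bm X,\bm y,\bm\alpha^{t-1})$ --- the E-step posterior at the previous iterate, which is the choice that makes the resulting bound tight at $\bm\alpha^{t-1}$ (the property underlying the monotonicity of EM). Because the exponent is linear in $\bm H^{T}\mathcal{L}\bm H$, Jensen replaces $\bm H^{T}\mathcal{L}\bm H$ by $\E_{\eta}(\bm H^{T}\mathcal{L}\bm H)$ and leaves only an $\bm\alpha$-independent entropy factor; applying it at the level of the integrand, where the dependence is linear, rather than after marginalizing out $\bm\theta$, where $\bm H^{T}\mathcal{L}\bm H$ would sit inside a matrix inverse, is what avoids having to invoke operator convexity of $\bm S\mapsto\bm S^{-1}$. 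After this substitution the integral is exactly the complete-data partition function of Lemma~\ref{complete_post}, but with the fixed matrix $\E_{\eta}(\bm H^{T}\mathcal{L}\bm H)$ in the smoothness term and with the full $n\times n$ kernel $\bm K$ and $l\times n$ expansion matrix $\bm J$, since constraint \eqref{smooth_H} --- not \eqref{smooth_h} --- is in force.

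Carrying out the remaining integrations exactly as in the derivations of Proposition~\ref{lapmed} and Lemma~\ref{complete_post} --- the Gaussian integral in $\bm\theta$, the flat integral in $b$ (which forces $\sum_{i=1}^l y_i\alpha_i=0$), the truncated-exponential integral in $\bm\gamma$ (which produces the $\alpha_i+\log(1-\alpha_i/C)$ terms), and the integral in $\lambda$ with $\beta$ held fixed --- and using the push-through identity $\bm X(\bm I+2\beta\bm X^{T}\bm A\bm X)^{-1}\bm X^{T}=\bm K(\bm I+2\beta\bm A\bm K)^{-1}$ together with Sylvester's determinant identity to pass to the kernel form, one obtains, after taking $-\log$ and discarding every term free of $\bm\alpha$ (the $\log\det(\bm I+2\beta\,\E_{\eta}(\bm H^{T}\mathcal{L}\bm H)\bm K)$ factor, the entropy of $q$, and all constants assembled from the fixed $\bm\alpha^{t-1}$), the expression $\sum_{i=1}^l\alpha_i-\frac{1}{2}\bm{\alpha}^{T}\bm{Y J}\bm{K}\big(\bm I+2\beta\,\E_{\eta}(\bm H^{T}\mathcal{L}\bm H)\bm K\big)^{-1}\bm{J}^{T}\bm{Y\alpha}+\sum_{i=1}^l\log(1-\alpha_i/C)$ under $\sum_{i=1}^l y_i\alpha_i=0$, $\alpha_i\ge0$, which is the asserted bound (the ``proportional to'' absorbing the discarded constants).

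The step I expect to require the most care is the Jensen bound itself. One must verify that the only surviving $\bm\eta$-dependence is that single quadratic smoothness term --- which is exactly where the running convention ``all labeled points are anomalous'' is indispensable --- so that the sum over the $2^{n}$ configurations of $\bm\eta$ collapses cleanly into $\E_{\eta}(\bm H^{T}\mathcal{L}\bm H)$ rather than into an intractable expectation of matrix inverses, and one must track the direction of the inequality and confirm that each discarded factor is genuinely constant in $\bm\alpha$, the expectation being taken throughout under the \emph{fixed} distribution $\P(\bm\eta\mid\bm X,\bm y,\bm\alpha^{t-1})$. Everything else --- the Gaussian marginalization and the Woodbury/Sylvester manipulations --- repeats the bookkeeping already used for Lemma~\ref{complete_post}.
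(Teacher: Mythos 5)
Your derivation reaches the paper's expression, but by a genuinely different route at the key step. The paper applies Jensen's inequality \emph{twice}: first in the standard EM form, replacing $-\log\left(\sum\P(\bm{X},\bm{\eta},\bm{y}\mid\bm{\alpha})\right)$ by $\E_\eta\left(-\log Z(\bm{\alpha})\right)$ plus an entropy constant, which leaves the intractable expectation of the quadratic form $-\tfrac{1}{2}\bm{\alpha}^T\bm{YJ}\bm{H}(2\beta\bm{H}^T\mathcal{L}\bm{H}+\bm{K}^{-1})^{-1}\bm{H}\bm{J}^T\bm{Y\alpha}$; and then a second time (using $\bm{JH}=\bm{J}$) to push $\E_\eta$ through the matrix inverse, yielding $\left(\bm{K}^{-1}+2\E_\eta(\bm{H}^T\mathcal{L}\bm{H})\right)^{-1}$ --- precisely the operator-convexity step you set out to avoid. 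Your single application of Jensen at the integrand level, before the Gaussian marginalization over $\bm{\theta}$ where the $\bm{\eta}$-dependence is linear in $\bm{H}^T\mathcal{L}\bm{H}$, lands at the same surrogate in one move; the two chains are consistent (the paper's intermediate bound, the expected complete-data dual objective, sits between the true objective and the final surrogate), so your version buys a cleaner argument at the cost of losing the explicit ``$Q$-function, then approximate'' structure that the paper uses to motivate the E-step. The one caveat is the inequality direction you flag but do not settle: with $q(\bm{\eta})=\P(\bm{\eta}\mid\bm{X},\bm{y},\bm{\alpha}^{t-1})$, Jensen lower-bounds the summed partition function and therefore upper-bounds $-\log\left(\sum\P(\bm{X},\bm{\eta},\bm{y}\mid\bm{\alpha})\right)$; the paper's own proof has the same issue (both of its displayed inequalities run in this direction while being labeled lower bounds), so your proposal faithfully reproduces the intended surrogate --- tight at $\bm{\alpha}^{t-1}$ --- but inherits, rather than repairs, the sign of the bound as stated in the lemma.
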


With this lower bound, we have an objective to maximize in the M-step of the EM algorithm. In the following subsection, we give a way to estimate $\E ( \bm{H}^T \mathcal{L}\bm{H} | \bm{X}, \bm{y}, \bm{\alpha}^{t-1} ) = \mathcal{L}\odot \E (\bm{\eta} \bm{\eta}^T | \bm{X}, \bm{y}, \bm{\alpha}^{t-1})$ for the E-step.

\subsection{Estimating the Latent Variables} \label{GEM}

Since $ \eta_i = 1 $ when the data point $ \bm{X}_i $ does not come from the nominal distribution, we can define it as an indicator variable $ \eta_i = \mathcal{I}(\bm{X}_i \notin \Omega_\phi)$ where $ \Omega_\phi $ is a minimum entropy set of level $ \phi $. So $ \eta_i $ can be viewed as the test function for a statistical test of whether the density of $ \bm{X}_i $ is equal to the density of the nominal points or not, and $ \Omega_\phi $ is the optimal acceptance region of the test. However because the nominal distribution of $ \bm{X} $ is unknown, the GEM principle \cite{NIPS2006_3145, NIPS2011_4287} estimates the optimal acceptance region using the property that if $ \underset{K, N \rightarrow \infty}{\lim} \, \frac{K}{N} = \phi $, a greedy K point k nearest neighbors graph (K-kNNG) converges almost surely to the minimum $\upsilon$-entropy set containing at least ($1 - \phi$)\% of the mass. Thus, for any $ ij $ element of the matrix $ \E(\bm{\eta} \bm{\eta}^T | \bm{X}, \bm{y}, \bm{\alpha}^{t-1}) $, we have
\begin{flalign*} 
& \E(\eta_i \eta_j | \bm{X}, \bm{y}, \bm{\alpha}^{t-1}) = \E \left(\mathcal{I}(\bm{X}_i, \bm{X}_j \notin \Omega_\phi) | \bm{X}, \bm{y}, \bm{\alpha}^{t-1} \right) \\
& \approx \mathcal{I}(\bm{X}_i, \bm{X}_j \notin \hat{\Omega}_\phi) = \hat{\eta}_i \hat{\eta}_j 
\end{flalign*}
where $ \hat{\Omega}_\phi $ is the estimated acceptance region.

However, if $ \hat{\Omega}_\phi $ uses the standard K-kNNG with edge lengths equal to Euclidean distances, the graph does not incorporate label information or how the points lie relative to the decision boundary. Since the neighbors of an anomalous point are also most likely anomalous, we instead use a similarity metric that penalizes a point for having anomalous neighbors. So the edge length between a point $i$ and its neighbor $j$ is
\begin{flalign} \label{sim_metric}
& \hspace{-.15cm} |e_{i(j)}| = 
\begin{cases} 
|| \bm{X}_i - \bm{X}_j||_2 + \hat{d}_j^{\, t-1} & \text{ if } \hat{d}_{j}^{t-1} > \rho \text{ or } y_j = 1 \\
|| \bm{X}_i - \bm{X}_j||_2 & \text{ otherwise}
\end{cases} 
\hspace{-.06cm}
\end{flalign}
where $ \hat{d}_j^{t-1} $ is the signed perpendicular distance between $ \bm{X}_j $ and the decision boundary, $\rho \ge 0$ is some threshold, and $ y_j $ is the label of $ \bm{X}_j $. Using a graph with the above edges in the GEM principle, we can estimate the optimal acceptance region, given a decision boundary and labels, by
\begin{flalign} \label{accept_region}
& \hat{\Omega}_\phi = \underset{\mathcal{X}_{N,K} \subset \mathcal{X}_N }{\arg\min} \sum_{i=1}^{K} \sum_{j=1}^k |e_{i(j)}|
\end{flalign}
where $ \mathcal{X}_{N, K} $ is a size $ K $ subset of the set of all points $ \mathcal{X}_N $ and $ \{ e_{i(1)}, ... , e_{i(k)} \} $ are the edges between point $i$ and its $k$ neighbors.

So using the GEM principle described above, $ \mathcal{L}\odot \hat{\bm{\eta}} \hat{\bm{\eta}}^T= \hat{\bm{H}}^T \mathcal{L}\hat{\bm{H}} $ is an estimator for $\E( \bm{H}^T \mathcal{L}\bm{H} | \bm{X}, \bm{y}, \bm{\alpha}^{t-1} )$. However, if the MED problem uses constraint \eqref{smooth_h}, the E-step would need an estimator for $\E( \bm{h}^T \mathcal{L}_\eta \bm{h} | \bm{X}, \bm{y}, \bm{\alpha}^{t-1} ) $ instead. 

\begin{lemma} \label{GEM_est}
Assume that $ \mathcal{L}\odot \hat{\bm{\eta}} \hat{\bm{\eta}}^T= \hat{\bm{H}}^T \mathcal{L}\hat{\bm{H}} $ is a good estimator for $\E( \bm{H}^T \mathcal{L}\bm{H} | \bm{X}, \bm{y}, \bm{\alpha}^{t-1} )$ and that the first $m$ neighbors of any anomalous points are also anomalous. Then $\hat{\bm{h}}^T \hat{\mathcal{L}}_\eta \hat{\bm{h}} $ is a good estimator for $\E( \bm{h}^T \mathcal{L}_\eta \bm{h} | \bm{X}, \bm{y}, \bm{\alpha}^{t-1} ) $ where $ \hat{\bm{h}} $ is the $ a \times n $ submatrix of the nonzero rows of $ \hat{\bm{H}} $ and $ \hat{\mathcal{L}}_{\eta} $ is the Laplacian matrix on only the set of data points $ \{ \bm{X}_i : \hat{\eta}_i = 1 \} $.
\end{lemma}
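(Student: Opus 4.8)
The plan is to show that, under the stated hypotheses, the $n\times n$ matrices $\hat{\bm{h}}^T \hat{\mathcal L}_\eta \hat{\bm{h}}$ and $\hat{\bm{H}}^T \mathcal L \hat{\bm{H}} = \mathcal L \odot \hat{\bm\eta}\hat{\bm\eta}^T$ are (essentially) the same, so that the assumed quality of the latter as an estimator of $\E(\bm H^T \mathcal L \bm H \mid \bm X, \bm y, \bm\alpha^{t-1})$ transfers to the former as an estimator of $\E(\bm h^T \mathcal L_\eta \bm h \mid \bm X, \bm y, \bm\alpha^{t-1})$. There are two links to make: a finite-sample identity relating $\hat{\bm{h}}^T \hat{\mathcal L}_\eta \hat{\bm{h}}$ to $\hat{\bm{H}}^T \mathcal L \hat{\bm{H}}$, and the corresponding population identity relating $\E(\bm h^T \mathcal L_\eta \bm h \mid \cdot)$ to $\E(\bm H^T \mathcal L \bm H \mid \cdot)$. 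The latter is immediate from the observation already used to justify constraints \eqref{smooth_H} and \eqref{smooth_h}, namely that the two empirical smoothness penalties $\bm\theta^T\bm X^T\bm H^T\mathcal L\bm H\bm X\bm\theta$ and $\bm\theta^T\bm X^T\bm h^T\mathcal L_\eta\bm h\bm X\bm\theta$ converge to the same manifold-intrinsic integral $\int_{x\in\mathcal M_\eta}\|\nabla_{\mathcal M_\eta} M(\bm X\mid\bm\theta)\|_2^2\, d\mathcal P_{X_\eta}$; hence the two conditional expectations coincide in the limit.

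For the finite-sample link, recall that $\hat{\bm h}$ is the $a\times n$ row-selection matrix for the indices with $\hat\eta_i=1$, so $\hat{\bm h}^T\hat{\bm h}=\hat{\bm H}$, and $\hat{\bm h}^T\hat{\mathcal L}_\eta\hat{\bm h}$ is just the $a\times a$ matrix $\hat{\mathcal L}_\eta$ re-embedded into the $n\times n$ index space on the anomalous block (zeros elsewhere), while $\hat{\bm H}^T\mathcal L\hat{\bm H}=\hat{\bm h}^T(\hat{\bm h}\mathcal L\hat{\bm h}^T)\hat{\bm h}$ is the anomalous--anomalous block $\hat{\bm h}\mathcal L\hat{\bm h}^T$ of the full Laplacian re-embedded in the same way. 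So it suffices to show $\hat{\mathcal L}_\eta=\hat{\bm h}\mathcal L\hat{\bm h}^T$, i.e.\ that the normalized graph Laplacian built on the set $\{\bm X_i:\hat\eta_i=1\}$ agrees with the anomalous block of the normalized graph Laplacian built on all $n$ points. This is where the hypothesis that the first $m$ neighbors of any anomalous point are also anomalous enters: provided the GEM $k$-nearest-neighbor graph of \eqref{accept_region} uses a neighborhood size $k\le m$, the $k$ nearest points (under the edge metric \eqref{sim_metric}) of any anomalous point are identical whether the search is over all $n$ points or over the $a$ anomalous points alone, so the adjacency lists and heat-kernel edge weights restricted to anomalous indices are literally unchanged, the row degrees are unchanged, and therefore the normalized-Laplacian entries on that block are unchanged. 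Chaining the three facts --- $\hat{\bm h}^T\hat{\mathcal L}_\eta\hat{\bm h}=\hat{\bm H}^T\mathcal L\hat{\bm H}$, the assumed closeness of $\hat{\bm H}^T\mathcal L\hat{\bm H}$ to $\E(\bm H^T\mathcal L\bm H\mid\cdot)$, and the population identity $\E(\bm H^T\mathcal L\bm H\mid\cdot)=\E(\bm h^T\mathcal L_\eta\bm h\mid\cdot)$ --- gives the claim.

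The main obstacle is the normalization (and symmetrization) of the graph Laplacian. The argument above controls the out-edges of an anomalous point, but in an undirected kNN graph an anomalous point $\bm X_i$ can also acquire an edge by lying among the $k$ nearest neighbors of some nominal point $\bm X_j$, and such an edge inflates the degree of $i$ in the full graph relative to its degree in the anomalous-only graph, perturbing the normalization on the anomalous block. To make $\hat{\mathcal L}_\eta=\hat{\bm h}\mathcal L\hat{\bm h}^T$ exact one must either adopt the directed (greedy) $K$-kNNG convention already implicit in \eqref{accept_region}, in which only out-edges count and the issue disappears, or strengthen the hypothesis so that no anomalous point is among the $k$ nearest neighbors of a nominal point; with a symmetrized graph one otherwise obtains the identity only up to a perturbation that is negligible when the anomalous and nominal supports are well separated (so that such cross edges are rare), which is consistent with the ``good estimator'' phrasing of the statement. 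Everything else is bookkeeping with the row-selection matrices $\hat{\bm h},\hat{\bm H}$ and the identity $\bm H^T\mathcal L\bm H=\mathcal L\odot\bm\eta\bm\eta^T$.
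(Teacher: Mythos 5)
Your proof follows essentially the same route as the paper's: both chain through $\hat{\bm{H}}^T\mathcal{L}\hat{\bm{H}}$ and $\E(\bm{H}^T\mathcal{L}\bm{H}\mid\bm{X},\bm{y},\bm{\alpha}^{t-1})$ via the triangle inequality, using the hypothesis that the first $m$ neighbors of an anomalous point are anomalous to identify (exactly, or up to a small perturbation) the anomaly-only kNN graph with the full graph pruned to the anomalous nodes. Your explicit treatment of the degree-normalization/symmetrization issue for the normalized Laplacian is a point the paper's proof glosses over --- it simply asserts $\delta(m)=\delta'(m)=0$ when $k_L=k_{L_\eta}=m$ --- so your version is, if anything, slightly more careful on that step.
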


\subsection{Maximum Margin Classification with the EM Algorithm}

From the previous three subsections, it is obvious that the EM algorithm for MAP estimation of the unobserved posterior distribution $ \P(\bm{\theta}, b, \bm{\gamma}, \lambda | \bm{X}, \bm{y}) $ is also a maximum margin classifier, which we call Latent Laplacian MED (LatLapMED).

\begin{theorem} \label{EM_alg}
Under Lemmas \ref{complete_post} and \ref{GEM_est}, the E-step of the EM algorithm is just getting estimators $ \hat{\eta}_i = \mathcal{I}(\bm{X}_i \notin \hat{\Omega}_\phi) $ for the function of unknown parameters $ \E( \eta_i | \bm{X}, \bm{y}, \bm{\alpha}^{t-1}) $. And, the M-step for maximizing $ \P(\bm{\theta}, b, \bm{\gamma}, \lambda | \bm{X}, \bm{y}) $ is a maximum margin classifier of the form,
	\begin{flalign*}
	& \underset{\bm{\alpha}}{\arg\max} \, \sum_{i=1}^l \alpha_i -\frac{1}{2} \bm{\alpha}^T \bm{Y J} \hat{\bm{K}}_\eta ( \bm{I} + 2\beta \hat{\mathcal{L}}_{\eta} \hat{\bm{K}}_\eta )^{-1} \bm{J}^T \bm{Y \alpha} \\
	& \quad + \sum_{i=1}^l \log(1-\frac{\alpha_i}{C}) \quad \text{ s.t. } \sum_{i=1}^l y_i \alpha_i = 0, \, \alpha_1, \dots, \alpha_l \ge 0 \\
	& \underset{b}{\arg\min} \hspace{-12pt} \sum_{s \in \{i | \hat{\alpha}_i \neq 0 \} } \hspace{-12pt} |(y_s - k(\bm{X}_{s}, \bm{X}_{\hat{\eta}}) ( \bm{I} + 2\beta \hat{\mathcal{L}}_{\eta} \hat{\bm{K}}_\eta)^{-1} \bm{J}^T \bm{Y} \hat{\bm{\alpha}}) - b| 
	\end{flalign*}
	where $ \hat{\Omega}_\phi $ is approximated with the GEM principle described in subsection \ref{GEM} and $ \hat{\bm{K}}_\eta $ is $ a \times a $ submatrix of only $ \{ \bm{X}_i : \hat{\eta}_i = 1 \} $.
\end{theorem}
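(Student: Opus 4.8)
\emph{Proof proposal.} The plan is to unwind one iteration of the EM recursion for MAP estimation of the observed posterior \eqref{obvs_post} and show that its E-step and M-step collapse to the two displayed forms by invoking the preceding lemmas in turn. First I would record the generic EM step: the M-step maximizes a lower bound on the negative log expected partition function $-\log\big(\sum\P(\bm{X},\bm{\eta},\bm{y}\,|\,\bm{\alpha})\big)$, and Lemma \ref{lowerbound} already supplies that bound explicitly as the quadratic objective in $\bm{\alpha}$ with the matrix $\E_\eta(\bm{H}^T\mathcal{L}\bm{H})$ inside the inverse. Hence the E-step has exactly one job: to produce this matrix under $\P(\bm{\eta}\,|\,\bm{X},\bm{y},\bm{\alpha}^{t-1})$.

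For the E-step, the key observation is that $\mathcal{L}$ is deterministic (it is the normalized graph Laplacian of the fixed heat kernel on all of $\bm{X}$), so $\E_\eta(\bm{H}^T\mathcal{L}\bm{H}) = \mathcal{L}\odot\E(\bm{\eta}\bm{\eta}^T\,|\,\bm{X},\bm{y},\bm{\alpha}^{t-1})$, i.e.\ its $ij$ entry is $\mathcal{L}_{ij}\,\E(\eta_i\eta_j\,|\,\cdot)$. I would then invoke the GEM construction of subsection \ref{GEM}: since $\eta_i=\mathcal{I}(\bm{X}_i\notin\Omega_\phi)$ and the greedy $K$-kNNG built with the edge lengths \eqref{sim_metric} converges almost surely to the minimum-entropy set, $\E(\eta_i\eta_j\,|\,\cdot)\approx\mathcal{I}(\bm{X}_i,\bm{X}_j\notin\hat{\Omega}_\phi)=\hat{\eta}_i\hat{\eta}_j$ with $\hat{\Omega}_\phi$ given by \eqref{accept_region}. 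On the diagonal $\eta_i^2=\eta_i$ gives $\E(\eta_i\,|\,\cdot)\approx\hat{\eta}_i$, and off the diagonal the estimate is the rank-one product $\hat{\eta}_i\hat{\eta}_j$; thus the whole matrix $\E(\bm{\eta}\bm{\eta}^T\,|\,\cdot)$ is pinned down by the single vector $\hat{\bm{\eta}}$. This is precisely the first assertion: the E-step is nothing more than forming the indicators $\hat{\eta}_i=\mathcal{I}(\bm{X}_i\notin\hat{\Omega}_\phi)$.

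For the M-step, I would substitute $\hat{\bm{H}}^T\mathcal{L}\hat{\bm{H}}=\mathcal{L}\odot\hat{\bm{\eta}}\hat{\bm{\eta}}^T$ for $\E_\eta(\bm{H}^T\mathcal{L}\bm{H})$ in the Lemma \ref{lowerbound} bound and then rewrite the resulting $n\times n$ objective in the compressed $a\times a$ form. Because $\hat{\bm{H}}$ annihilates every coordinate with $\hat{\eta}_i=0$, the matrix $\hat{\bm{H}}^T\mathcal{L}\hat{\bm{H}}$ is supported on the $a$ estimated-anomalous rows and columns; the discussion preceding Lemma \ref{complete_post} shows that constraints \eqref{smooth_H} and \eqref{smooth_h} have the same continuum limit, and Lemma \ref{GEM_est} guarantees that $\hat{\bm{h}}^T\hat{\mathcal{L}}_\eta\hat{\bm{h}}$ is the matching estimator in the $a\times a$ parametrization. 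So running the recursion with constraint \eqref{smooth_h} is equivalent, and for that constraint the M-step objective is exactly the dual of Lemma \ref{complete_post} with $\bm{\eta}$ replaced by the E-step output $\hat{\bm{\eta}}$, i.e.\ with $\bm{K}_\eta$, $\mathcal{L}_\eta$, $\bm{J}$ becoming $\hat{\bm{K}}_\eta$, $\hat{\mathcal{L}}_\eta$ and the $l\times a$ expansion matrix. The MAP estimator $\hat{\bm{\theta}}$, the kernelized decision rule $\hat{y}_{i'}=\text{sign}\big(k(\bm{X}_{i'},\bm{X}_{\hat{\eta}})(\bm{I}+2\beta\hat{\mathcal{L}}_\eta\hat{\bm{K}}_\eta)^{-1}\bm{J}^T\bm{Y}\hat{\bm{\alpha}}+\hat{b}\big)$, and the $\ell_1$ fit for $\hat{b}$ then carry over verbatim from Proposition \ref{lapmed}, producing the displayed M-step.

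The main obstacle is this last collapse, reconciling the $n\times n$ ``$\bm{H}$-form'' with the $a\times a$ ``$\bm{h}$-form''. The $\bm{H}$-form is unavoidable when deriving the lower bound, because only there does $\bm{\eta}$ enter multilinearly --- $\bm{H}^T\mathcal{L}\bm{H}$ has $ij$ entry $\eta_i\eta_j\mathcal{L}_{ij}$, so the expectation passes cleanly onto $\bm{\eta}\bm{\eta}^T$ --- whereas the $\bm{h}$-form is what makes the M-step the efficient classifier in the statement but its Laplacian $\mathcal{L}_\eta$ depends on $\bm{\eta}$ in a non-multilinear way. Making the two coincide requires a block / Schur-complement manipulation of $\bm{K}(\bm{I}+2\beta\hat{\bm{H}}^T\mathcal{L}\hat{\bm{H}}\bm{K})^{-1}$, using that the selection $\bm{J}^T\bm{Y}$ only touches labeled points and that all labeled points are anomalous, together with the neighbor-consistency hypothesis of Lemma \ref{GEM_est} (the first $m$ neighbors of an anomaly are anomalous) that forces $\hat{\mathcal{L}}_\eta$ and the anomalous block of $\mathcal{L}$ to line up. Everything else --- writing the $Q$-function, differentiating the bound in $\bm{\alpha}$, and reading off $\hat{\bm{\theta}}$ and the decision rule --- is bookkeeping inherited from subsections \ref{svm} and \ref{lapMED}.
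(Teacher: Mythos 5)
Your proposal is correct and follows essentially the same route as the paper's proof: a Jensen's-inequality lower bound on the observed log posterior (the EM $Q$-function), the GEM estimate $\hat{\eta}_i\hat{\eta}_j$ of $\E(\eta_i\eta_j\,|\,\cdot)$ for the E-step, and the block-matrix inversion of Lemma \ref{complete_post} together with Lemma \ref{GEM_est} to collapse the $n\times n$ quadratic form to the $a\times a$ objective for the M-step. The only (immaterial) difference is ordering --- the paper applies the Jensen bound directly with constraint \eqref{smooth_h} and then compresses, whereas you start from the \eqref{smooth_H}-form of Lemma \ref{lowerbound} and invoke Lemma \ref{GEM_est} to pass to the $\bm{h}$-form --- and you correctly identify that reconciliation as the step Lemma \ref{GEM_est} exists to justify.
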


LatLapMED exploits the idea that all high utility points are anomalous because the similarity metric in $ \hat{\Omega}_\phi $ is dependent on the decision boundary and label information; thus it will be skewed away from points with high utility neighbors. This is crucial because any high utility points incorrectly estimated as nominal will not be considered in the M-step and thus cannot be predicted as high utility. In contrast, it is not that vital to correctly estimate low utility anomalous points because it is not of interest to distinguish between them and the nominal. As the decision boundary moves every EM iteration, it changes the penalties that neighboring nodes can incur in the similarity metric. Since the normalized margin is 1, setting $\rho=1$ is typical; however, if the data is difficult to classify, it may be appropriate to set $\rho > 1$ because there is less confidence in the classification. Thus the threshold $ \rho $ can be set empirically using prior domain knowledge of the structure of the data or by cross-validation. 

\begin{corollary} \label{decision_rule}
Once the EM algorithm converges, the decision rule is 
\begin{flalign*} 
& \hat{y}_{i'} = 
\begin{cases}
 -1 \qquad \emph{ if } \hat{\eta}_{i'} = 0 &\\
\emph{sign} \left( k(\bm{X}_{i'}, \bm{X}_{\hat{\eta}}) ( \bm{I} + 2\beta \hat{\mathcal{L}}_{\eta} \hat{\bm{K}}_\eta)^{-1} \bm{J}^T \bm{Y} \hat{\bm{\alpha}} + \hat{b} \right) \,\, \emph{o.w.} &
\end{cases} 
\end{flalign*}
where $ \bm{X}_{\hat{\eta}} $ are only the data points estimated to be anomalous and $ \hat{\bm{\alpha}}, \hat{b} $ are final optimal parameters.
\end{corollary}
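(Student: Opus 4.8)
The plan is to specialize the generic MED decision rule to the converged posterior produced by the EM algorithm of Theorem~\ref{EM_alg}, and then to split into cases according to the value of the estimated latent indicator $\hat\eta_{i'}$. For a test point $\bm X_{i'}$ the MED rule reads $\hat y_{i'} = \mathrm{sign}\big( \E_{\P(\bm\theta, b, \bm\gamma, \lambda \mid \bm X, \bm y)}[\, M(\bm X_{i'}, \eta_{i'} \mid \bm\theta, b)\,] \big)$, and since the discriminant $M(\bm X_{i'}, \eta_{i'} \mid \bm\theta, b) = \eta_{i'}(\bm X_{i'}\bm\theta + b)$ involves neither $\bm\gamma$ nor $\lambda$, those variables integrate out and the rule depends only on the marginal posterior of $(\bm\theta, b)$ evaluated with the final Lagrange multipliers $\hat{\bm\alpha}$, the final Laplacian $\hat{\mathcal L}_\eta$, and the final Gram submatrix $\hat{\bm K}_\eta$ on $\{\bm X_i : \hat\eta_i = 1\}$.

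For the branch $\hat\eta_{i'} = 1$ the integrand is $\bm X_{i'}\bm\theta + b$. Exactly as in the passage from Section~\ref{lapMED} to Proposition~\ref{lapmed}, the exponent of the posterior is linear in $\bm\theta$ plus the quadratic smoothness term coming from constraint~\eqref{smooth_H} (equivalently~\eqref{smooth_h} via Lemma~\ref{GEM_est}), so under the $N(\bm 0, \bm I)$ prior the marginal posterior of $\bm\theta$ is Gaussian and its mean coincides with the MAP estimator of Theorem~\ref{EM_alg}; likewise $b$ attains its non-informative Gaussian limit and contributes the SVM-style bias $\hat b$ already computed there. Substituting the MAP estimator and applying the same representer-type kernelization used to obtain the decision rule of Proposition~\ref{lapmed} — now with every matrix restricted to the $a$ points with $\hat\eta_i = 1$ and $\bm J = [\bm I\ \bm 0]$ the $l \times a$ expansion matrix of Lemma~\ref{complete_post} — gives $\E[M(\bm X_{i'}, 1 \mid \bm\theta, b)] = k(\bm X_{i'}, \bm X_{\hat\eta})(\bm I + 2\beta \hat{\mathcal L}_\eta \hat{\bm K}_\eta)^{-1}\bm J^T \bm Y \hat{\bm\alpha} + \hat b$, whose sign is the ``o.w.'' branch of the claimed rule. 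No new computation is needed beyond Proposition~\ref{lapmed} and Theorem~\ref{EM_alg}; one only notes that convergence of the EM iterates freezes $\hat{\bm\alpha}, \hat{\mathcal L}_\eta, \hat{\bm K}_\eta$, and $\hat b$ at their final values.

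For the branch $\hat\eta_{i'} = 0$ the discriminant $M(\bm X_{i'}, 0 \mid \bm\theta, b)$ vanishes identically, so the posterior expectation is $0$ and $\mathrm{sign}(0)$ is not determined by the formula. Here the argument is not a calculation but an appeal to the structural hypothesis underlying the entire construction — every nominal point is low-utility, equivalently all high-utility points are anomalous — which forces the tie-break $\hat y_{i'} = -1$ whenever $\hat\eta_{i'} = 0$. Combining the two branches yields the stated decision rule. The main obstacle is precisely this degenerate branch: one must make explicit that it is resolved by the modeling assumption $\hat\eta_{i'} = 0 \Rightarrow \hat y_{i'} = -1$ rather than by the maximum-margin optimization, which never sees the nominal points; a secondary and routine point is checking that the posterior mean equals the MAP estimator (Gaussianity in $\bm\theta$, non-informative limit in $b$) so that the expectation in the MED decision rule reproduces the kernelized classifier of Theorem~\ref{EM_alg}.
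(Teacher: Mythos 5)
Your proposal is correct and follows essentially the same route as the paper: evaluate the latent-variable discriminant via the primal--dual (kernelized) relationship for the anomalous branch, and resolve the degenerate $\hat{\eta}_{i'}=0$ branch by the modeling assumption that all nominal points are low-utility. The extra care you take in identifying the posterior mean with the MAP estimator is a harmless elaboration of what the paper does implicitly.
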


In this work, we approximate $ \E_{\bm{\eta}}(\bm{\eta} \bm{\eta}^T | \bm{X}, \bm{\alpha}^{t-1}) $ using the GEM principle with similarity metric \eqref{sim_metric} because the expectation distribution is unknown. Because the GEM principle is nonparametric, it does not impose, potentially incorrect, distributional assumptions on the unknown distribution of anomalies, which may be extremely difficult to parametrically characterize. Other estimators, derived using either a different similarity metric in the GEM principle or another nonparametric method altogether, could be used instead in the E-step. We believe our estimator is a good choice because it is asymptotically consistent and empirically we find it is sufficient enough such that the objective in the M-step increases every iteration. The LatLapMED algorithm, summarized below, produces a joint estimate of both anomaly and utility labels. This simultaneous estimation allows the method to incorporate additional information that would be lost when estimating the anomaly and utility labels independently. 

\setlength{\extrarowheight}{2pt}
\setlength{\arrayrulewidth}{1pt}
\begin{table}[h]
	\centering
	\begin{tabular}{l} \hline 
		\textbf{Algorithm 1:} LatLapMED \\ \hline 
		\begin{minipage}{.9\linewidth}
			\smallskip
	\begin{algorithmic} 
		\STATE {\bfseries Input: } $ \phi, \rho, k, C, \beta, \bm{X}, \bm{y} $
		\REPEAT
		\STATE E-Step:
		\STATE \quad 1) Given $\hat{\bm{d}}^{t-1} = \hat{\bm{K}}_\eta ( \bm{I} + 2\beta \hat{\mathcal{L}}_{\eta} \hat{\bm{K}}_\eta)^{-1} \bm{J}^T \hspace{-3pt} \bm{Y} \hat{\bm{\alpha}}^{t-1} \hspace{-1pt} + \hat{b}^{t-1} $ 
		\STATE \quad 2) $ \hat{\eta}_i = \mathcal{I}(\bm{X}_i \notin \hat{\Omega}_\phi) $ where $ \hat{\Omega}_\phi $ is the solution of \eqref{accept_region} \\
		\STATE M-Step: 
		\STATE \quad 1) Given $\hat{\bm{\eta}}$, form new submatrices $ \hat{\bm{K}}_\eta $ and $\hat{\mathcal{L}}_{\eta} $
		\STATE \quad 2) Solve the objectives in Theorem \ref{EM_alg} to get $\hat{\bm{\alpha}}^t, \hat{b}^t $
		\UNTIL{convergence}
		\STATE {\bfseries Return: } $ \hat{\bm{\eta}}, \hat{\bm{\alpha}}, \hat{b} $
	\end{algorithmic}
	\label{myalgo}
\smallskip
\end{minipage} \\ \hline 
\end{tabular}
\end{table}

\subsubsection{Computational Complexity}

The E-step uses a K-kNNG for the estimators $ \hat{\eta}_i $. This K-kNNG is defined by the Euclidean distance between points, which is constant over all EM iterations, and a penalty, which changes between EM iterations. The Euclidean distances are calculated and the $k$ neighbors are sorted only once at initialization, which have computational complexity $O(n^2 p + n^2 \log(n))$. At every EM iteration, the E-step just needs to add the $n$ penalties given from the previous M-step and sort the $n$ total edge lengths, adding computational complexity $O(n \log (n))$ per iteration. The M-step maximizes a quadratic objective (formed from $O(a^3)$ matrix operations) over the Lagrange multipliers $\alpha_i$, which can be solved with conic interior point methods in polynomial $a$ time, where $a << n$. Alternatively, the M-step objective can be approximated as a quadratic program and solved using sequential minimal optimization in linear $a$ time. Thus, the LatLapMED algorithm has overall computation time $O(n^2 p + n^2 \log(n) + \text{\#iter}*(n \log(n)+a^3+a^q))$ where $1 <= q << \infty$ depends on how the objective is solved. If we assume that the computational time of the E-step dominates significantly over the computational time of the M-step because $a << n$, then this reduces to roughly $O(n^2 p + n^2 \log(n) + \text{\#iter}*(n \log (n)))$ . We have had no problem implementing the LatLapMED algorithm even for $n$ as high as 100,000 points. Parallelization of the initial sorted distances for the K-kNNG can also improve its computational speed to $O(\frac{n^2 p + n^2 \log(n)}{\# nodes})$.

The final LatLapMED posterior $ \P(\bm{\theta}, b, \bm{\gamma}, \lambda | \bm{X}, \hat{\bm{\eta}}, \bm{y}) $, where $ \hat{\bm{\eta}} $ is the estimated latent variables at EM convergence, is a probabilistic model with a mode that performs maximum margin classification. Thus LatLapMED has the classification robustness of discriminant methods, but the natural flexibility of generative methods to incorporate latent variables. Additionally the generative nature also provides for sequential classification by using the posterior distribution as a new prior for new data in the MED framework. This allows LatLapMED to be very applicable to real world problems where data is often continuously collected in a stream. Alternatively, it can also be used to process a very large dataset, $n >> 10^5$, in smaller batches allowing for the LatLapMED algorithm to be feasible for very large $n$.

\subsubsection{Limitations and Future Work}

While the computational complexity of the LatLapMED algorithm is feasible for moderately large datasets, it is still more computationally expensive than many competing methods. However the performance improvement may make it worthwhile to implement the proposed algorithm in challenging anomaly detection problems. Strategies for reducing computational complexity through parallelization, specialized hardware approaches, or implementation of second order acceleration methods are also possible. Additionally, the problem of online sequential anomaly detection and classification is open. One possible approach would be to make the E-step be only weakly dependent on of the prior information to make it adaptive to changes in the prior over time. An alternative solution would be to modify the K-kNNG in the GEM algorithm to incorporate a time varying prior through weighted edges or a suitable choice of level set boundary that varies with the prior. Finally the number of tuning parameters in the LatLapMED algorithm might be reduced by using hyperpriors or empirical risk minimization methods.

\section{Experiments} 

In this section, we apply the LatLapMED algorithm to both simulated and real data sets and demonstrate that the proposed method outperforms alternative two-stage methods that first estimate the anomaly labels and then predict the utility labels of only the estimated anomalous points. For combination in the two-stage methods, we consider three algorithms for non-parametric anomaly detection and both popular supervised and semi-supervised algorithms for classification; these are shown in Table \ref{2stage_algs}.

\renewcommand{\arraystretch}{1.3}
\begin{table}[h]
	\caption{Algorithms Used to Form Two-Stage Methods} 
	\label{2stage_algs}
	\centering
	\begin{tabular}{ccccc}
		\hline
	 Anomaly Detector & & Supervised Classifier & & Semi-Supervised \\
		\hline
		GEM & & SVM & & LapMED \\
		1SVM & + & RF & or & LapSVM \\
		SSAD & & NN & & LDS \\
		\hline
	\end{tabular}
\end{table}

The one class SVM (1SVM) of \cite{NIPS1999_1723} and the standard GEM with euclidean distance K-kNNG of \cite{NIPS2006_3145} are unsupervised, but the semi-supervised anomaly detection (SSAD) algorithm of \cite{gornitz2013toward} incorporates the labeled points as known anomalous points. In the three supervised methods: SVM, random forests (RF) of \cite{breiman2001random}, and neural networks (NN), we train the algorithms with labeled points and predict the labels of only the anomalous unlabeled points, and in the three semi-supervised methods: the LapMED from Section \ref{lapMED}, the LapSVM of \cite{Belkin:2006:MRG:1248547.1248632}, and the low density separation (LDS) algorithm of \cite{chapelle2005semi}, we train the algorithms on all anomalous points to classify their unlabeled ones. Because these two-stage methods naively perform anomaly detection independently of classification, there is no synergy between the two stages unlike in the LatLapMED method, which binds the two actions through the EM algorithm.

For all of the following experiments, we choose the parameters of classifiers based on the methods described in their original papers. We verify that our parameter choices are acceptable because under ``oracle" conditions where the anomaly labels are known, all classifiers can classify relatively equally as well, which is to be expected. All methods are implemented in MATLAB, but most of the optimization is done with an optimization package written in another language. Specifically, we use LIBSVM \cite{CC01a} for the SVM classifiers, CVX \cite{gb08, cvx} for optimization of the LapMED objective, CVX or LIBQP \cite{libqp} for SSAD, the code provided in \cite{chapelle2005semi} for LDS, and the corresponding MATLAB toolboxes for random forest and neural networks. Thus the GEM routine in the E-step of LatLapMED is solved purely in MATLAB, but the LapMED objective in the M-step is solved with CVX. Because the high utility class is much smaller than the low utility class, we choose to use precision and recall to measure the performance of all methods due to the benefits argued in \cite{davis2006relationship}.

\subsection{Simulation Results} \label{sim}

We simulate datasets of sample size 7,000 where the variables come from a multivariate folded t-distribution with location $\bm{\mu} = \bm{0}$, a random positive definite scale matrix $\bm{\Sigma}$, and 30 degrees of freedom. We calculate the utility scores for each point by \\ $ score_i = \underset{h}{\max} \, \frac{1}{|\mathcal{C}_h|} \sum_{ j \in \mathcal{C}_h} \bm{X}_{ij} -\frac{1}{p - |\mathcal{C}_h|} \sum_{ j \notin \mathcal{C}_h} \bm{X}_{ij} $ where $\mathcal{C}_h$ is a random set of column indicies for random utility component $h$. Thus 5\% of the data is anomalous and the top 25\% of anomalies with the highest utility scores are defined as having high utility. We observed 30\% of the high utility anomalies and an equal number of low utility anomalies. 

In the exact simulations below, we use the parameters listed in Table \ref{alg_params}. For SSAD, we allow the regularization parameter for margin importance $\kappa$ to vary. For LatLapMED, we set $ \rho = 1 $ because we believe, in the space of only the anomalies, the data is pretty separable and easily classified. Figure \ref{box_rates}a shows the ``oracle" scenario, where anomaly labels are known; so the nominal points, $ \eta_i = 0 $, are automatically given a label $ \hat{y}_i = -1$, and the semi-supervised methods (LapMED, LapSVM, LDS) train and classify on only points with $ \eta_i = 1 $ while the supervised methods (SVM, RF, NN) predict on the unlabeled $ \eta_i = 1 $ points. Under this scenario, all classifiers have relatively equal precision and recall, which indicates that our parameter choices are acceptable since each classifier has its advantages and disadvantages. Note that LapMED and LapSVM are essentially the same model as discussed theoretically in Section \ref{lapMED}. Additionally note that when the anomaly labels are known, we have the complete posterior for LatLapMED described in Section \ref{sec:complete_post}, which has the same mode as the LapMED posterior given only anomalous data.

\renewcommand{\arraystretch}{1.3}
\begin{table}[h]
	\caption{Parameters Used in the Algorithms} 
	\label{alg_params}
	\centering
	\begin{tabular}{p{1.4cm}l}
		\hline 
		\makebox[\dimexpr(\width-2em)][l]{\textbf{Anomaly Detector}} & \multicolumn{1}{c}{\textbf{Parameters}} \\
		GEM & $k = 10$ neighbors in kNN graph, the $K$ points = $\phi n$ \\
		1SVM & $\sigma = 1$ in rbf kernel, $\nu = \phi$\\
		SSAD & $\sigma = 1$ in rbf kernel, $\kappa$, label $ C = 1$, unlabel $C = 1/\phi$\\
		\hline
		\textbf{Classifier} & \multicolumn{1}{c}{\textbf{Parameters}} \\
		SVM & $\sigma = 1$ in rbf kernel, cost $C = 50$ \\ 
		LapSVM & $\Uparrow$, $\beta = \frac{10 C l}{a^2} $, Laplacian: $k=50$, $\tau = 100$ in heat kernel \\
		LapMED & $\Uparrow$ (same as above in LapSVM) \\
		LDS & $k=50$ neigh., $\sigma = 1$ in rbf, $C = 50$, softening = 1.5 \\
		RF & 50 weak learners, default params. in MATLAB toolbox \\
		NN & 50 neurons, default params. in MATLAB toolbox \\
		\hline
		\makebox[\dimexpr(\width-2em)][l]{\textbf{Joint Method}} & \multicolumn{1}{c}{\textbf{Parameters}} \\
		LatLapMED & $\Uparrow$ (same as above in LapMED), threshold $ \rho = 1, \phi $ \\
		\hline
	\end{tabular}
\end{table}

\begin{figure*}[h]
	\centering
	\includegraphics[width=\textwidth]{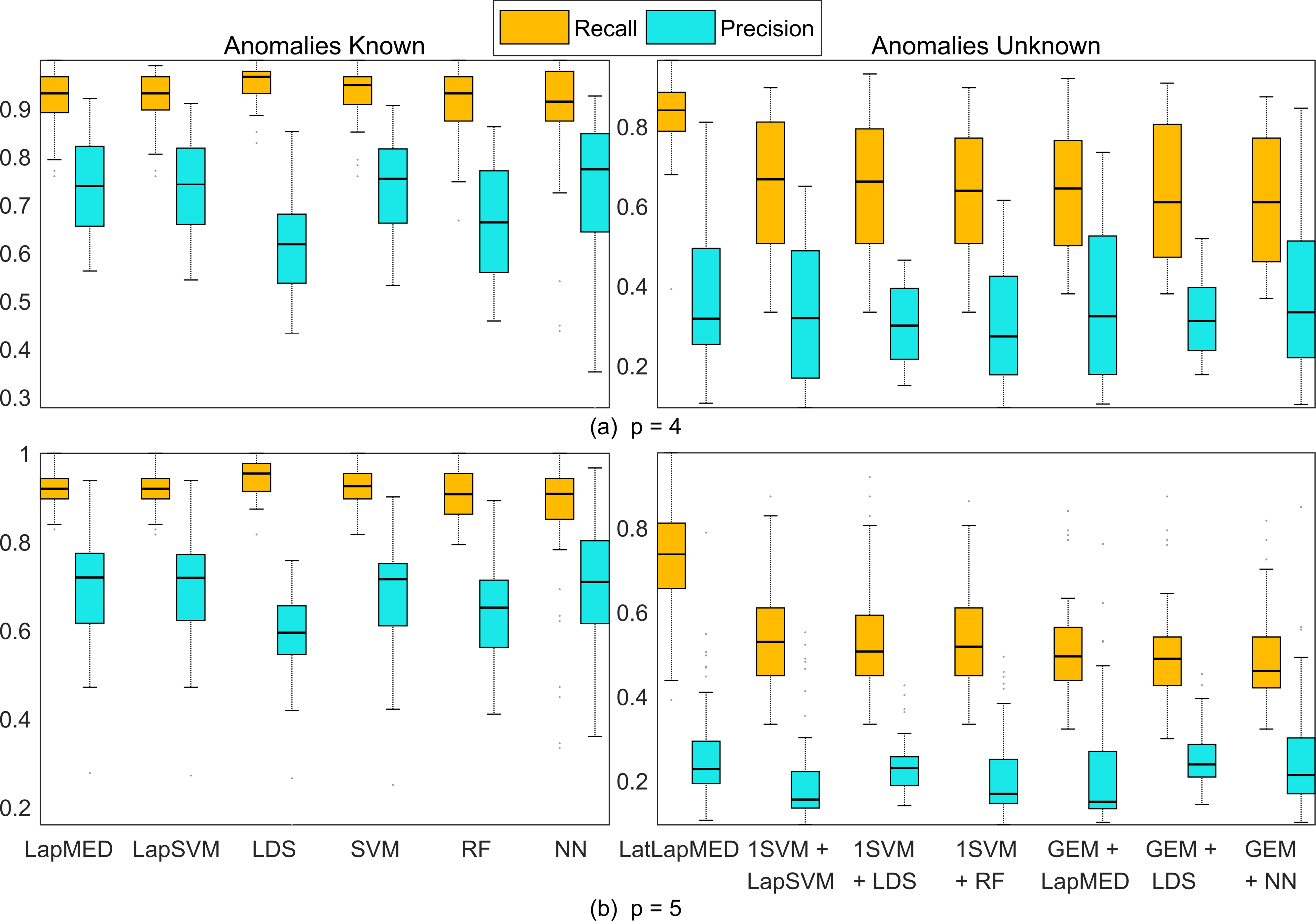} 
	\caption{Boxplots showing 50 trials of precision and recall of different methods. a) Under the ``oracle" scenario, where anomaly labels are known, all classifiers have relatively equal performance. b) The anomaly labels are unknown, but the percentage of the data that is anomalous is known to be $\phi=0.05$. All methods have relatively equal precision, but the LatLapMED method has much better recall because it does not treats the utility and anomaly labels as independent.} \label{box_rates}
\end{figure*} 

In a realistic scenario, as opposed to the ``oracle" one, the anomaly labels are unknown and must be estimated. So we compare our LatLapMED method, which estimates the anomaly and utility labels simultaneously, with two-stage methods that first perform either GEM or 1-class SVM for anomaly detection and then uses one of the above classifiers to label the utility of the anomalous points. In Figure \ref{box_rates}b, we show similar boxplot plots to the ones in Figure \ref{box_rates}a, but in this scenario, the anomaly labels are latent. While the LatLapMED method has similar precision as the alternative two-stage methods, it has much better recall. This indicates that LatLapMED is able leverage more information from the labeled anomalous points than a naive two-stage method that treats the utility label information and anomaly status of points as independent.

Figure \ref{PR_curves} compares LatLapMED against all combinations of alternative two-stage methods in $p = 3$ and $p=6$ dimensions respectively. The Precision-Recall (PR) curves (averaged over 50 trials) show that for all levels of $ \phi $ LatLapMED always dominates all of the naive two-stage methods. It is well known that as dimensionality increases, nonparametric estimation becomes more difficult, so the performance of all methods degrade because anomaly detection becomes more difficult. However, Table \ref{auc_table} shows that LatLapMED always has superior performance over the other methods irrespective of the dimension. 

\begin{figure}[h]
\centering
\subfloat[$p=3$]{\includegraphics[width=\columnwidth]{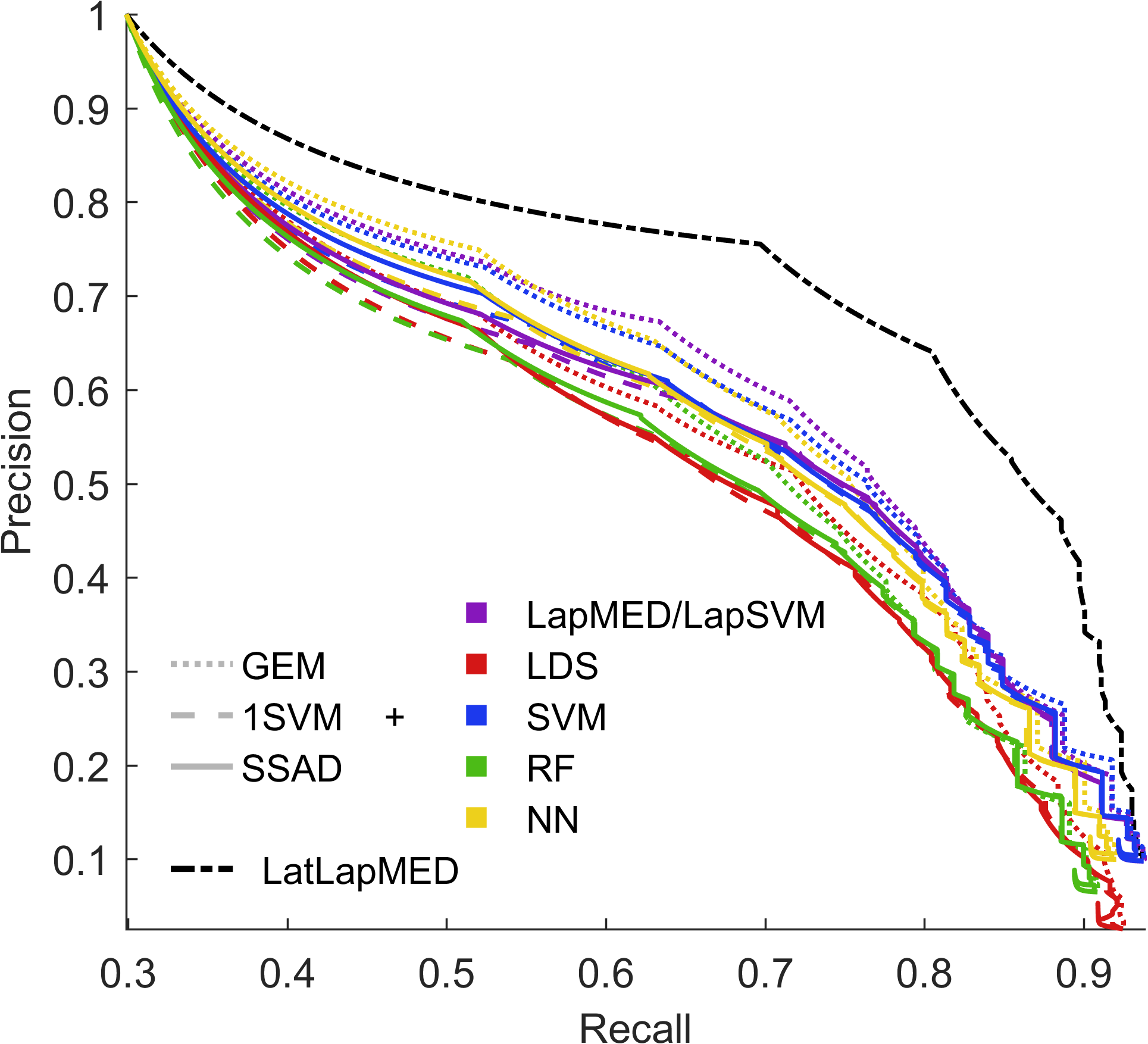}}
\hfil
\subfloat[$p=6$]{\includegraphics[width=\columnwidth]{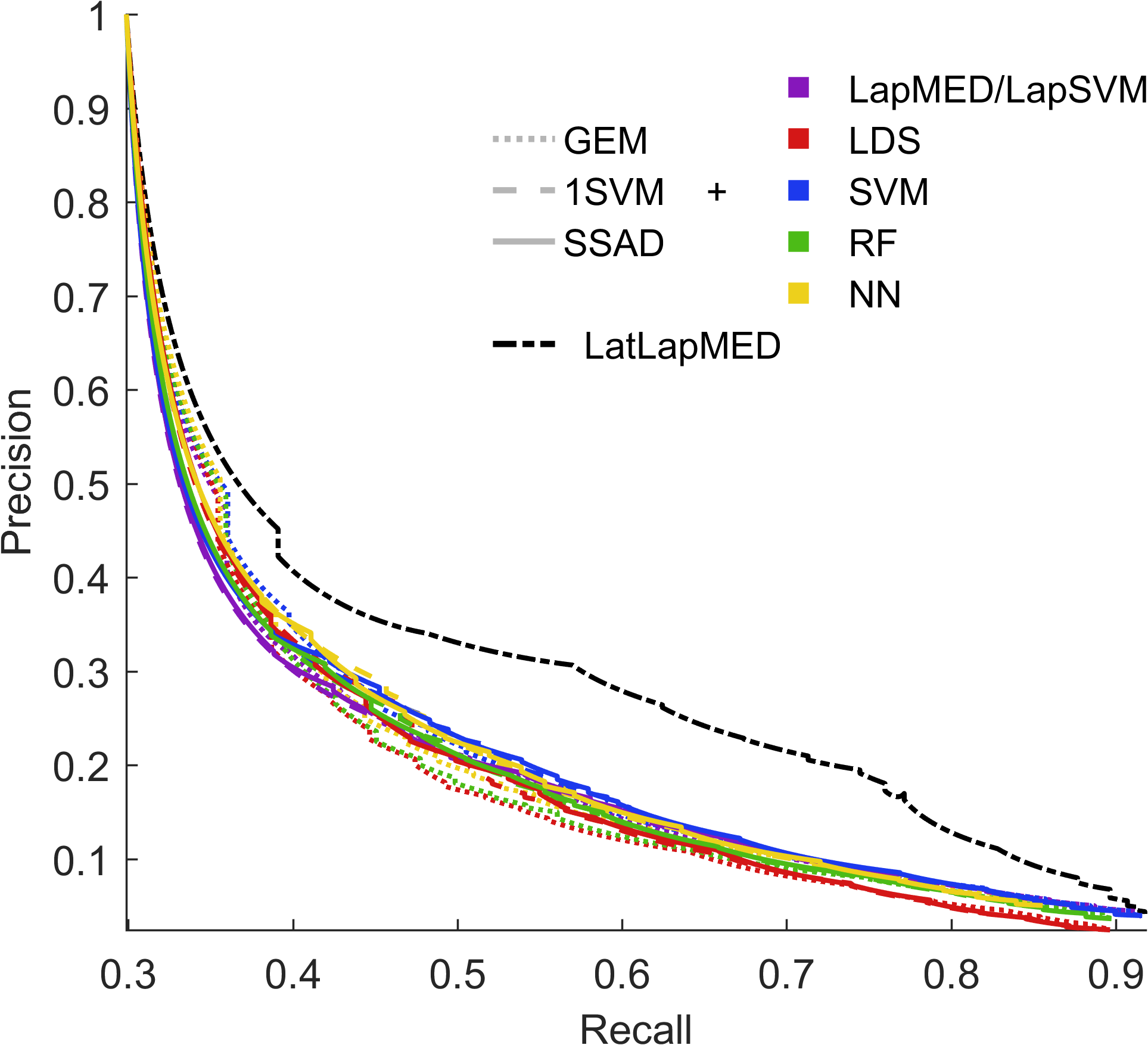}}
\caption{PR curves for various anomaly levels $\phi$ in 3 and 6 dimensions. The area under the PR curves are listed in Table \ref{auc_table}. The LatLapMED method significantly outperforms all the naive two-stage methods.}
\label{PR_curves}
\end{figure} 

\renewcommand{\arraystretch}{1.3}
\begin{table}[h]
\caption{Area Under the PR Curve (AUC-PR)} 
\label{auc_table}
\centering
\begin{tabular}{lcccc}
\hline
& $p = 3$ & $p = 4$ & $p = 5$ & $p = 6$ \\
\hline
GEM+LapMED &0.69246 &0.55129 &0.43539 &0.42204 \\
1SVM+LapSVM &0.66449 &0.54049 &0.439 &0.41696 \\
SSAD+LapSVM &0.66899 &0.54406 &0.44189 &0.41842 \\
GEM+LDS &0.65386 &0.53299 &0.44842 &0.41061 \\
1SVM+LDS &0.63228 &0.52483 &0.44607 &0.41522 \\
SSAD+LDS &0.63614 &0.52994 &0.45126 &0.41557 \\
GEM+SVM &0.68738 &0.56141 &0.43449 &0.42702 \\
1SVM+SVM &0.6675 &0.55206 &0.43766 &0.42286 \\
SSAD+SVM &0.67246 &0.55739 &0.44122 &0.42413 \\
GEM+RF &0.65483 &0.52194 &0.42837 &0.4151 \\
1SVM+RF &0.63192 &0.51556 &0.43516 &0.41667 \\
SSAD+RF &0.63823 &0.52169 &0.43913 &0.41737 \\
GEM+NN &0.68344 &0.54716 &0.44525 &0.41907 \\
1SVM+NN &0.66064 &0.54109 &0.44639 &0.42216 \\
SSAD+NN &0.66673 &0.54613 &0.45027 &0.42216 \\
LatLapMED &0.76253 &0.66417 &0.51792 &0.47854 \\
\hline
\end{tabular}
\end{table}

Figure \ref{trials} gives an in-depth view of LatLapMED compared to some alternative two-stage methods. The anomaly level $\phi$ of the methods is set to be between $0.05$ and $0.06$ to control the number of false positives. The number of false negatives in LatLapMED is much lower than that of the other methods. This is because unlike the two-stage methods, if LatLapMED misses some high-utility points when estimating anomalies, it can correct for them in the next EM iteration. Figure \ref{trial20} shows how both the number of false positives and false negatives decrease as the EM algorithm in LatLapMED iterates. In comparison to the naive two-stage GEM+LapMED, which would be equivalent to LatLapMED with only one EM iteration, LatLapMED is able to recover over 50\% of the high utility points initially missed in the first EM iteration.

\begin{figure}[h]
	\centering
		\includegraphics[width=\columnwidth]{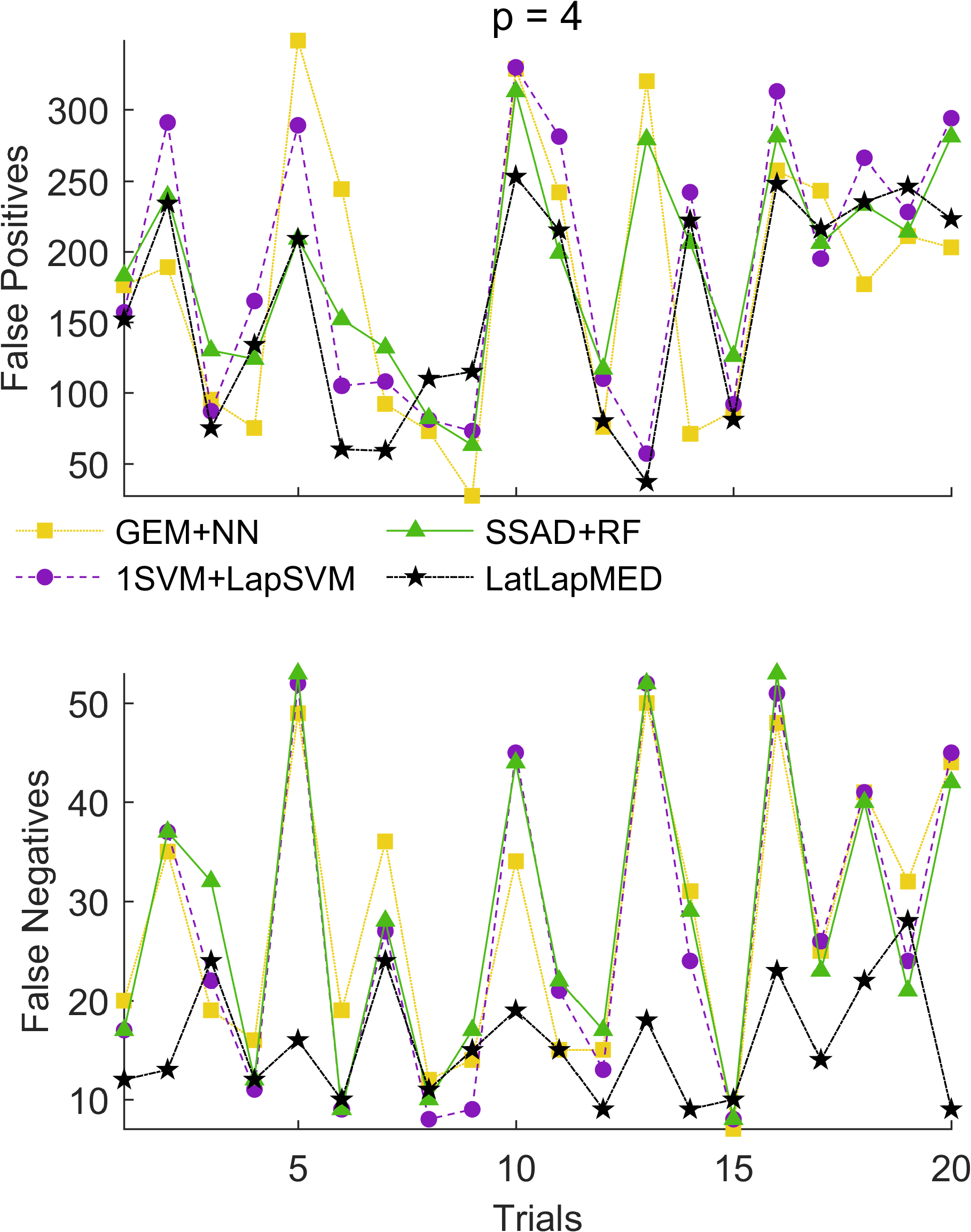}
		\caption{The number of false positives and false negatives in 20 different trials with $ \phi \in [0.05, 0.06]$ to control the number of false positives. LatLapMED has far fewer false negatives for the same number of false positives compared to the other methods.} \label{trials}
\end{figure} 

\begin{figure}[h]
\centering
\includegraphics[width=\columnwidth]{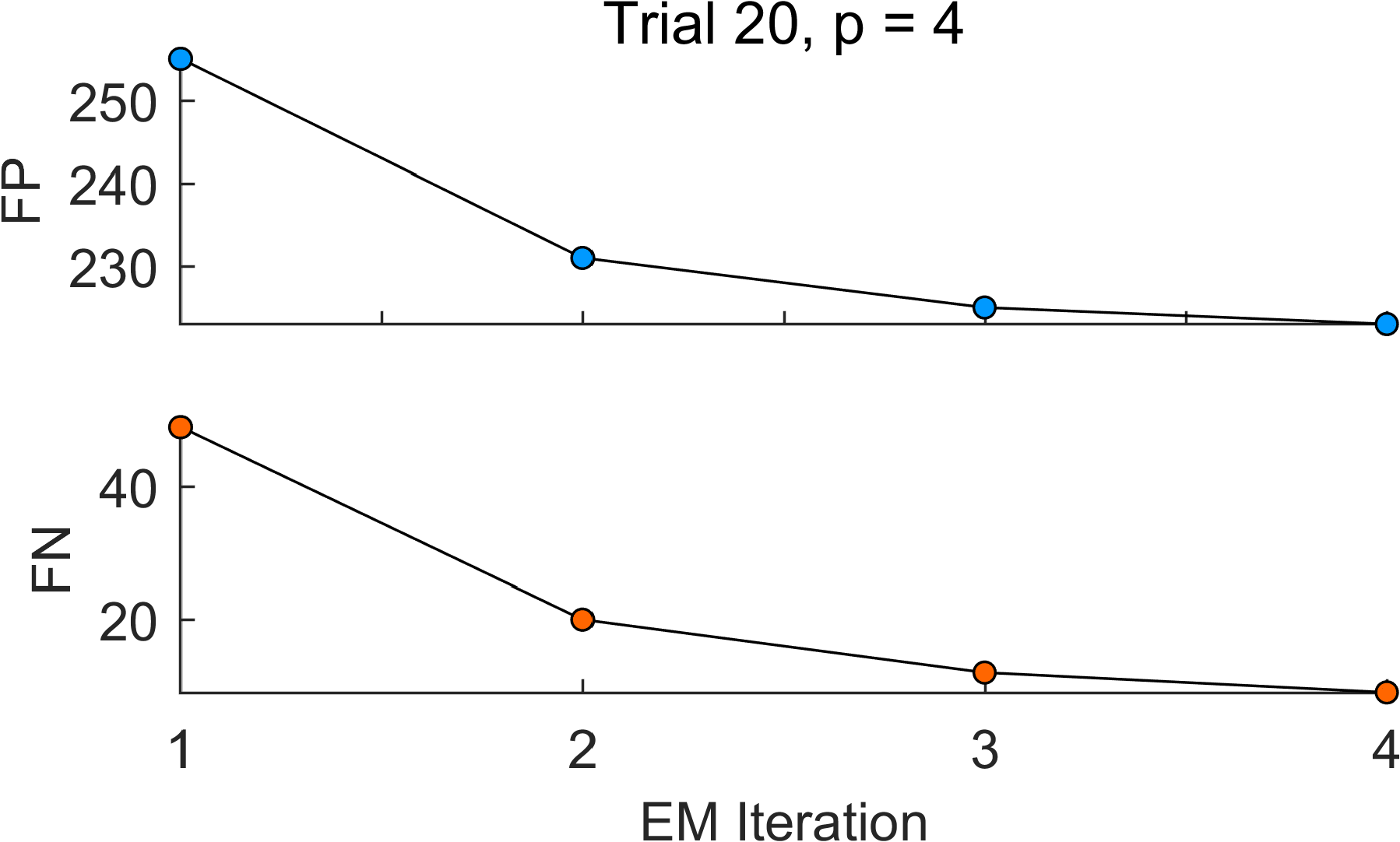}
\caption{The number of false positives (FP) and false negatives (FN) predicted by LatLapMED decrease as the EM iterations in the algorithm increase. This is due to the synergy between the anomaly detection in the E-step and the classification in the M-step.}
\label{trial20}
\end{figure} 

\renewcommand{\arraystretch}{1.3}
\begin{table}[h]
	\caption{Mean and standard deviation of CPU times over 50 trials.} 
	\label{speed}
	\centering
	\begin{tabular}{lcccc}
		\hline
		& Average CPU time & Standard Deviation \\
		\hline
SSAD+LapSVM &3.9784 &0.11991 \\
SSAD+LDS &4.2316 &0.18557 \\
SSAD+SVM &3.9491 &0.29182 \\
SSAD+RF &4.2800 &0.23246 \\
SSAD+NN &4.2534 &0.26558 \\
GEM+LapMED &1.4428 &0.13992 \\
GEM+LDS &1.7278 &0.19305 \\
GEM+SVM &1.0253 &0.05716 \\
GEM+RF &1.3775 &0.09409 \\
GEM+NN &1.2916 &0.11141 \\
1SVM+LapSVM &0.3750 &0.09295 \\
1SVM+LDS &0.7897 &0.19116 \\
1SVM+SVM &0.2106 &0.06824 \\
1SVM+RF &0.5984 &0.17712 \\
1SVM+NN &0.4781 &0.11465 \\
LatLapMED &2.9944 &0.64291 \\
		\hline
	\end{tabular}
\end{table}

In Table \ref{speed}, we show the mean and standard deviation of the CPU time in seconds for each algorithm over 50 trials. The algorithms were run on a quad-core Intel i7-6700HQ CPU at 3.20GHz using Matlab. While we have not numerically optimized each algorithm, we used as many built-in functions and optimizers, which are written in compiled languages (C++, Fortran), to show the best performance. LatLapMED is slower than many of the two-stage methods, but it is not exorbitantly slower, and it is still faster than two-stage methods that use SSAD.

\subsection{Experiment on Reddit data}

We apply LatLapMED to the May 2015 comments of the Reddit comment dataset \cite{Reddit}. We form a sample of subreddits with variables: \emph{Avg. Number of Users}, \emph{Avg. Gilded}, and \emph{Avg. Score}, where only subreddits with at least 100 comments are included and additionally only the top 7,000 most controversial subreddits are chosen (from approximately 10,000). The anomalous data points are defined as those that lie in the tail 3\% of any variable's marginal distribution and we are interested in only the controversial subreddits among these anomalous points. Thus, we treat the average controversy of each subreddit as a utility score, with again 30\% visible and the top 25\% as high utility. This mimics the situation where a domain expert is given roughly 1.5\% of the dataset that is considered to be anomalous, and asked to label it. 

Here the cost regularization parameter $C = 2$ is chosen to be smaller than in the simulations because we expect the margin to be noisier, and similarly the softening parameter in LDS is increased to $100$. The other parameters, which basically describe the structure of the classifiers, are the same as in the simulations. We choose $\phi$ to control the false positive rate (FPR) to be around $0.05$, which corresponds to a commonly chosen Type 1 error level. Table \ref{rates_table} shows the rates of LatLapMED, all the competing naive two-stage methods, and an ``oracle'' LapMED, which we use as a lower/upper bound on the best LatLapMED could do. For a Type 1 error level of $0.05$, LatLapMED does considerably better than the competing methods. It has the lowest false negative rates (FNR) and the highest recall. While 1SVM+RF and GEM+NN have slightly higher precision than LatLapMED, they also have much lower recall.

\renewcommand{\arraystretch}{1.3}
\begin{table}[h]
\caption{False Positive Rate, False Negative Rate, Recall, Precision $\,$ for Reddit Data} 
\label{rates_table}
\centering
\begin{tabular}{lcccc}
\hline
& FPR & FNR & Recall & Precision \\
\hline
{``oracle'' LapMED} & 0.02224 & 0.13008 & 0.86992 & 0.41797 \\
GEM+LapMED &0.050022 &0.17886 &0.82114 &0.22697 \\
1SVM+LapSVM &0.050167 &0.1626 &0.8374 &0.22991 \\
SSAD+LapSVM &0.050167 &0.17886 &0.82114 &0.22646 \\
GEM+LDS &0.050894 &0.19512 &0.80488 &0.22049 \\
1SVM+LDS &0.048422 &0.17073 &0.82927 &0.23448 \\
SSAD+LDS &0.049004 &0.1626 &0.8374 &0.23409 \\
GEM+SVM &0.056275 &0.15447 &0.84553 &0.21181 \\
1SVM+SVM &0.049295 &0.1626 &0.8374 &0.23303 \\
SSAD+SVM &0.048131 &0.17886 &0.82114 &0.2338 \\
GEM+RF &0.050749 &0.17073 &0.82927 &0.22616 \\
1SVM+RF &0.047114 &0.17886 &0.82114 &0.23765 \\
SSAD+RF &0.049731 &0.1626 &0.8374 &0.23146 \\
GEM+NN &0.047259 &0.17886 &0.82114 &0.23709 \\
1SVM+NN &0.047404 &0.18699 &0.81301 &0.23474 \\
SSAD+NN &0.047695 &0.19512 &0.80488 &0.23185 \\
LatLapMED &0.049149 &0.14634 &0.85366 &0.23702 \\
\hline
\end{tabular}
\end{table}

Additionally compared to ``oracle" LapMED, LatLapMED does not do considerably worse. Its precision is not nearly as high as the ``oracle" method's; however, 191 out of the 338 subreddits incorrectly predicted to be controversial (false positives), actually have controversy scores in the top 25\%, but since they are not anomalous, they are not labeled as high utility by our criteria. This is very promising because it implies that our method is able to additionally find high utility points that may not lie far enough in the tails of the empirical distribution. The recall of LatLapMED is almost as high as that of the ``oracle" method's with LatLapMED only failing to label as controversial (false negatives) the subreddits { \small [`vegetarian', `DesignPorn'] } compared to the ``oracle". Otherwise, both methods failed to find the other 16 subreddits: \\
{ \small [`pathofexile', `Cleveland', `Liberal', `mississauga', `Eesti', `Images', `uofmn', `trackertalk', `Kuwait', `asianbros', `saskatchewan', `rule34\_comics', `boop', `macedonia', `wanttobelieve', `DebateACatholic'] }. \\
While some of these topics are definitely controversial, others such as `mississauga' and `saskatchewan' (providences of Canada) or `uofmn' (University of Minnesota) seem to have unreasonably high controversy scores. It is not particularly worrisome that LatLapMED failed to predict these topics as controversial because the ``oracle'' also incorrectly classified them, so many of them could be considered mislabeled by the domain expert. 

\subsection{Experiment on CTU-13 data}

Finally, we apply LatLapMED to the CTU-13 dataset, which is of botnet traffic on a university network that was captured by CTU University, Czech Republic, in 2011 \cite{CTU-13}. The dataset contains real botnet traffic mixed with normal traffic and background traffic. The authors of \cite{CTU-13} processed the captured traffic into bidirectional NetFlows and manually labeled them. The dataset contains 13 different scenarios and for our experiments below we considered two scenarios, 1 and 8. Scenario 1 contains the malware Neris.exe, which is a bot that sent spam, connected to an HTTP CC, and used HTTP to do ClickFraud. Scenario 8 has malware QvodSetuPuls23.exe, which contacted many different Chinese C\&C hosts, received large amounts of encrypted data, and scanned and cracked the passwords of machines. We are interested in identifying the botnet traffic (high-utility points) from the rare, but uninteresting normal traffic (low-utility, anomalous points) and the background traffic (nominal points) in a situation where instead of manually labeling all points, only a small subset is labeled.

For each scenario, we randomly sample 38,000 NetFlows of background traffic and 1000 NetFlows each of normal and botnet traffic, making 5\% of the samples anomalous. We allow 300 of the normal and 300 of the botnet traffic to have visible labels so a domain expert would only be manually labeling 1.5\% of all the samples in the dataset. We used 9 of the features provided by the NetFlows dataset: duration of the flow, direction of the flow, total packets, total bytes, source bytes, source and destination port numbers and IP addresses (in integer format). Thus each of the two datasets have dimensions $p$ = 9 features and $n$ = 40,000 total samples, of which $a$ = 2,000 are anomalous and $l$ = 600 are labeled. In order to have multiple trials, we perform this sampling 10 times so that we have 10 almost independent experiments for each scenario. The following results are the average of these 10 trials.

Because many of the features are discrete and not continuous, we use cosine distances and cosine kernels instead of euclidean distances and the radial basis kernel; otherwise, the parameters are the same as in Table \ref{alg_params}. We choose $\phi$ to so that the Type 1 error level (or FPR) is $0.01$. Like in the Reddit experiments, we compare LatLapMED against all the competing naive two-stage methods and an ``oracle'' LapMED and summarize the performance in Tables \ref{s1_rates_table} and \ref{s8_rates_table} for scenarios 1 and 8 respectively. The only two-stage method we do not compare against are those using SSAD due to its unmanageably high computational complexity.

\renewcommand{\arraystretch}{1.3}
\begin{table}[h]
	\caption{Mean False Pos. Rate, False Neg. Rate, Recall, Precision \hspace{\textwidth} for Scenario 1 over 10 trials} 
	\label{s1_rates_table}
	\centering
	\begin{tabular}{lcccc}
		\hline
		& FPR & FNR & Recall & Precision \\
		\hline
``oracle' LapMED &0.0038308 &0.3457 &0.6543 &0.82145 \\
GEM+LapMED &0.012279 &0.6916 &0.3084 &0.39202 \\
GEM+LDS &0.010464 &0.6929 &0.3071 &0.4317 \\
GEM+SVM &0.010049 &0.6936 &0.3064 &0.44034 \\
GEM+RF &0.0062231 &0.6924 &0.3076 &0.56775 \\
GEM+NN &0.010844 &0.691 &0.309 &0.43359 \\
1SVM+LapSVM &0.012713 &0.6836 &0.3164 &0.38969 \\
1SVM+LDS &0.0098 &0.6816 &0.3184 &0.45581 \\
1SVM+SVM &0.0093462 &0.6853 &0.3147 &0.4641 \\
1SVM+RF &0.0078769 &0.6742 &0.3258 &0.51662 \\
1SVM+NN &0.0089462 &0.6762 &0.3238 &0.48675 \\
LatLapMED &0.0094692 &0.402 &0.598 &0.61899 \\
		\hline
	\end{tabular}
\end{table}

\renewcommand{\arraystretch}{1.3}
\begin{table}[h]
	\caption{Mean False Pos. Rate, False Neg. Rate, Recall, Precision \hspace{\textwidth} for Scenario 8 over 10 trials} 
	\label{s8_rates_table}
	\centering
	\begin{tabular}{lcccc}
		\hline
		& FPR & FNR & Recall & Precision \\
		\hline
``oracle' LapMED &0.0031538 &0.15 &0.85 &0.87359 \\
GEM+LapMED &0.011538 &0.6979 &0.3021 &0.41185 \\
GEM+LDS &0.011118 &0.6997 &0.3003 &0.42204 \\
GEM+SVM &0.011267 &0.6969 &0.3031 &0.42978 \\
GEM+RF &0.007859 &0.6967 &0.3033 &0.49887 \\
GEM+NN &0.0095487 &0.696 &0.304 &0.4506 \\
1SVM+LapSVM &0.0092205 &0.6916 &0.3084 &0.4729 \\
1SVM+LDS &0.010682 &0.6919 &0.3081 &0.43073 \\
1SVM+SVM &0.011746 &0.6898 &0.3102 &0.43154 \\
1SVM+RF &0.010272 &0.6807 &0.3193 &0.44506 \\
1SVM+NN &0.0096103 &0.6818 &0.3182 &0.46041 \\
LatLapMED &0.011064 &0.1779 &0.8221 &0.65125 \\
		\hline
	\end{tabular}
\end{table}

The average error rates of the ``oracle" method shown in Tables \ref{s1_rates_table} and \ref{s8_rates_table} indicates that identifying the botnet traffic is not extremely difficult when the anomalies are known. However, when the anomaly indicator variables are latent or unknown, the tables show that the problem is more difficult. Nonetheless, in both scenarios, LatLapMED has the lowest false negative rates (FNR) and the highest precision and recall. The most competitive two-stage methods do not come close to the performance of LatLapMED, and particularly in scenario 8, LatLapMED has significantly higher precision and recall. This is a direct result of the fact that all malware are statistical outliers, so incorporating label information into anomaly detection helps to identify botnet traffic.

\renewcommand{\arraystretch}{1.3}
\begin{table}[h]
	\caption{Mean and standard deviation of CPU times (in seconds) \hspace{\textwidth} for Scenario 1 over 10 trials} 
	\label{s1_CPU}
	\centering
	\begin{tabular}{lcccc}
		\hline
		& Average CPU time & Standard Deviation \\
		\hline
GEM+LapMED &31.2625 &0.58902 \\
GEM+LDS &27.7656 &0.68016 \\
GEM+SVM &26.55 &0.68274 \\
GEM+RF &26.9656 &0.82544 \\
GEM+NN &27.0109 &1.2891 \\
1SVM+LapSVM &4.1141 &0.10951 \\
1SVM+LDS &3.3328 &0.22134 \\
1SVM+SVM &1.9734 &0.08137 \\
1SVM+RF &2.6484 &0.18001 \\
1SVM+NN &2.5859 &0.57919 \\
LatLapMED &56.2547 &15.4514 \\
		\hline
	\end{tabular}
\end{table}

\renewcommand{\arraystretch}{1.3}
\begin{table}[h]
	\caption{Mean and standard deviation of CPU times (in seconds) \hspace{\textwidth} for Scenario 8 over 10 trials} 
	\label{s8_CPU}
	\centering
	\begin{tabular}{lcccc}
		\hline
		& Average CPU time & Standard Deviation \\
		\hline
GEM+LapMED &31.7359 &0.66264 \\
GEM+LDS &29.4672 &0.28389 \\
GEM+SVM &26.4953 &0.24859 \\
GEM+RF &26.9734 &0.31724 \\
GEM+NN &26.9922 &0.6696 \\
1SVM+LapSVM &4.3141 &0.0814 \\
1SVM+LDS &6.8063 &0.22136 \\
1SVM+SVM &1.9953 &0.10154 \\
1SVM+RF &2.6875 &0.10725 \\
1SVM+NN &2.6219 &0.55448 \\
LatLapMED &50.3656 &0.84805 \\
		\hline
	\end{tabular}
\end{table}

We also measure the CPU times of the two scenarios using the same Intel CPU and code as described in the simulations of subsection \ref{sim}. Tables \ref{s1_CPU} and \ref{s8_CPU} show that while LatLapMED is significantly slower than all the competing methods, it on average takes less than 1 minute to process on dataset of 40,000 NetFlows, which is still very reasonable.

\section{Conclusion}

We have proposed a novel data-driven method called latent Laplacian minimum entropy discrimination (LatLapMED) for detecting anomalous points that are of high utility. LatLapMED extends the MED framework to simultaneously handle semi-supervised utility labels and incorporate anomaly information estimated by off-the-shelf anomaly detection methods via EM. Through this extended framework, LatLapMED exploits the key idea that high-utility points are also anomalous. This allows the method to work successfully when provided with a very small number of utility labels. Our simulation results show its advantages over combinations of standard anomaly detection and classification algorithms. In particular, theses two-stage approaches perform worse because they treat statistical rarity and label information as independent components, which LatLapMED overcomes by explicitly combining them through a latent variable model and the EM algorithm. This performance increase is shown in the EM iterations of LatLapMED where using previous label information helps identify anomalies and vice versa. Finally, we applied our method to the Reddit and CTU-13 botnet datasets to show its applicability in real life situations where only certain high-utility anomalies are of interest to the end user.

\appendices

\section{Proofs for Section III}

\begin{proof}[Proof of Proposition \ref{lapmed}]
The posterior $ \P(\bm{\theta}, b, \bm{\gamma}, \lambda | \bm{X}, \bm{y}) $ is a log concave distribution where the log posterior can be treated as a Lagrangian function. So the MAP estimator $ \hat{\bm{\theta}} $ is the solution to $ \frac{\partial}{\partial \bm{\theta}} \log \left( \P(\bm{\theta}, b, \bm{\gamma}, \lambda | \bm{X}, \bm{y}) \right) = \sum_{i=1}^l \alpha_i y_i \bm{X}^T_i - ( \bm{I} + 2 \beta \bm{X}^T \mathcal{L}\bm{X}) \bm{\theta} = \bm{0} $
and the Lagrange multipliers $ \bm{\alpha} $ are the solution to
\begin{flalign*} 
& \frac{\partial}{\partial \bm{\alpha}} \log \left( \P(\bm{\theta}, b, \bm{\gamma}, \lambda | \bm{X}, \bm{y}) \right) |_{\bm{\theta} = \hat{\bm{\theta}}} = \frac{\partial}{\partial \bm{\alpha}} -\log \left( Z(\bm{\alpha}) \right) = \bm{0} \\
\end{flalign*}
where $ -\log \left( Z(\bm{\alpha}) \right) = $ Bias + Smoothness + $ \sum_{i=1}^l \text{Margin}_i $ + Weight, and integrating out each of the terms
\begin{flalign*} 
& \text{Bias:} -\log \left(\int_{-\infty}^\infty \frac{e^{-b^2 / 2 \sigma^2 }}{2 \pi \sigma^2} \exp \left\{ \sum_{i=1}^l \alpha_i y_i b \right\} db \right) \\
& = -\frac{\sigma^2}{2} \left(\sum_{i=1}^l \alpha_i y_i \right)^2 \Rightarrow \text{ if } \sigma \rightarrow \infty \text{ then } \sum_{i=1}^l \alpha_i y_i = 0 \\
& \text{Smoothness:} \\
& -\log \left( \int_{0}^{\infty} \hspace{-.1cm} B e^{-B \lambda} e^{\beta \lambda} \, d\lambda \hspace{-.05cm} \right) \hspace{-.05cm} = \log \left(1 - \frac{\beta}{B} \right) \hspace{-.05cm} \rightarrow 0 \text{ as } B \rightarrow \infty \\
& \text{Margin:} \\
& -\log \left( \int_{-\infty}^{1} C e^{-C(1-\gamma_i)} e^{- \alpha_i \gamma_i} \, d\gamma_i \right) = \alpha_i + \log(1 - \alpha_i / C) \\ 
& \text{Weight:} \\
& - \hspace{-.05cm} \log \left( \int_{-\infty}^\infty \hspace{-.1cm} \frac{e^{-\bm{\theta}^T \bm{\theta}/2}}{(2\pi)^{p/2}} \exp \hspace{-.05cm} \left\{ \sum_{i=1}^l \alpha_i y_i \bm{X}_i \bm{\theta} - \beta \bm{\theta}^T \bm{X}^T \mathcal{L} \bm{X} \bm{\theta} \hspace{-.05cm} \right\} \hspace{-.05cm} d\bm{\theta} \hspace{-.05cm} \right) \\
& = -\frac{1}{2} \left(\sum_{i=1}^l \alpha_i y_i \bm{X}_i \right) (\bm{I} + 2\beta \bm{X}^T \mathcal{L}\bm{X})^{-1} \left(\sum_{i=1}^l \bm{X}_i^T \alpha_i y_i \right) \\
& \quad \, + \log \left(\det(\bm{I} + 2\beta \bm{X}^T \mathcal{L}\bm{X}) \right) \\
& = -\frac{1}{2} \bm{\alpha}^T \bm{Y J} ( \bm{K}^{-1} + 2 \beta \mathcal{L})^{-1} \bm{J}^T \bm{Y \alpha} + \text{tr} \left(\log(\bm{I} + 2\beta \mathcal{L}\bm{K}) \right) \\
& \propto - \frac{1}{2} \bm{\alpha}^T \bm{Y J} \bm{K} ( \bm{I} + 2\beta \mathcal{L}\bm{K})^{-1} \bm{J}^T \bm{Y \alpha} .
\end{flalign*}
Thus the relationship between the probabilistic primal estimator and the kernel dual estimator is $ \bm{X} \hat{\bm{\theta}} = \bm{X} (\bm{I} + 2 \beta \bm{X}^T \bm{L X})^{-1} \bm{X}^T \bm{J}^T \bm{Y} \hat{\bm{\alpha}} = \bm{K} ( \bm{I} + 2\beta \mathcal{L}\bm{K})^{-1} \bm{J}^T \bm{Y} \hat{\bm{\alpha}} $.
\end{proof}

\section{Proofs for Section IV}

\begin{proof}[Proof of Lemma \ref{complete_post}]

Note that all labeled points are anomalous so $ y_i \eta_i = y_i $ for all $ i \in [1, l] $ or $ \bm{J H} = \bm{J}$. Thus following the same procedure as Proposition \ref{lapmed}, the MAP estimator for the posterior $ \P(\bm{\theta}, b, \bm{\gamma}, \lambda | \bm{X}, \bm{\eta}, \bm{y}) $ is $ \hat{\bm{\theta}} = ( \bm{I} + 2\beta \bm{X}^T \bm{h}^T \mathcal{L}_\eta \bm{h} \bm{X})^{-1} \bm{X}^T \bm{J}^T \bm{Y} \bm{\alpha} $ where the Lagrange multipliers $\bm{\alpha}$ are the solution to $ \underset{\bm{\alpha}}{\arg\max}-\log \left( Z(\bm{\alpha}) \right) $, which has terms
\begin{flalign*} 
& \text{Bias:} -\log \left(\int_{-\infty}^\infty \frac{e^{-b^2 / 2 \sigma^2 }}{2 \pi \sigma^2} \exp \left\{ \sum_{i=1}^l \alpha_i y_i \eta_i b \right\} db \right) \\
& = -\frac{\sigma^2}{2} \left(\sum_{i=1}^l \alpha_i y_i \eta_i \right)^2 \Rightarrow \text{ if } \sigma \rightarrow \infty, \\
& \hspace{115pt} \text{ then } \sum_{i=1}^l \alpha_i y_i \eta_i = \sum_{i=1}^l \alpha_i y_i = 0 \\
& \text{Smooth: Same as Proposition \ref{lapmed}} \\
& \text{Margin: Same as Proposition \ref{lapmed}} \\ 
& \text{Weight:} \\
& -\log \Bigg( \int_{-\infty}^\infty \hspace{-.1cm} \frac{e^{-\bm{\theta}^T \bm{\theta}/2}}{(2\pi)^{p/2}} \\
& \qquad \qquad \exp \left\{ \sum_{i=1}^l \alpha_i y_i \eta_i \bm{X}_i \bm{\theta} - \beta \bm{\theta}^T \bm{X}^T \bm{h}^T \mathcal{L}_\eta \bm{h} \bm{X} \bm{\theta} \right\} d\bm{\theta} \hspace{-.05cm} \Bigg) \\
& \propto \hspace{-.1cm} -\frac{1}{2} \hspace{-.07cm} \left(\sum_{i=1}^l \alpha_i y_i \eta_i \bm{X}_i \hspace{-.1cm} \right) \hspace{-.1cm} (\bm{I} + 2\beta \bm{X}^T \hspace{-.07cm} \bm{h}^T \hspace{-.07cm} \mathcal{L}_\eta \bm{h} \bm{X})^{-1} \hspace{-.13cm} \left(\sum_{i=1}^l \bm{X}_i^T \hspace{-.07cm} \alpha_i y_i \eta_i \hspace{-.1cm} \right) \\
& = -\frac{1}{2} \bm{\alpha}^T \bm{Y J} \bm{H} ( 2\beta \bm{h}^T \mathcal{L}_{\eta} \bm{h} + \bm{K}^{-1})^{-1} \bm{H} \bm{J}^T \bm{Y \alpha} \\
& = -\frac{1}{2} \bm{\alpha}^T \bm{Y J} \big( (2\beta \bm{h}^T \mathcal{L}_{\eta} \bm{h} + \bm{K}^{-1})^{-1} \odot \bm{\eta} \bm{\eta}^T \big) \bm{J}^T \bm{Y \alpha} 
\end{flalign*}
Instead of $ n \times n $ matrix operations, the Weight term can be compressed to $ a \times a $ matrix operations by permuting the rows and columns of the matrices so that the first $ a $ rows/cols correspond to $ \eta_i = 1 $. 
Then $ \left( (2\beta \bm{h}^T \mathcal{L}_{\eta} \bm{h} + \bm{K}^{-1})^{-1} \odot \bm{\eta} \bm{\eta}^T \right) $ is 0 everywhere except for the top left $ a \times a $ block, which (using block matrix inversion) can be expressed as $ [ (2\beta \bm{h}^T \mathcal{L}_{\eta} \bm{h} + \bm{K}^{-1})^{-1} ]_{11} = (\mathcal{L}_{\eta} + \bm{K}_{\eta}^{-1})^{-1} = \bm{K}_{\eta} (\bm{I} + \mathcal{L}_{\eta} \bm{K}_{\eta})^{-1} $ where $ \bm{K}_{\eta} $ is the top left $ a \times a $ block of the original Gram matrix.

Thus the primal dual relationship is $ \bm{X} \hat{\bm{\theta}} = \bm{K}_\eta ( \bm{I} + 2\beta \mathcal{L}_\eta \bm{K}_\eta)^{-1} \bm{J}^T \bm{Y} \hat{\bm{\alpha}} $.
\end{proof}

\begin{proof}[Proof of Lemma \ref{lowerbound}]
	
Because $ \P(\bm{X}, \bm{\eta}, \bm{y} | \bm{\alpha} ) $ is log convex, the function $ -\log(\cdot) $ is concave on its domain. Thus by Jensen's Inequality,
\begin{flalign*}
& -\log \left( \P(\bm{X}, \bm{y} | \bm{\alpha}) \right) = -\log \left( \sum_{\eta_1=0}^1 \dots \sum_{\eta_n=0}^1 q(\bm{\eta}) \frac{\P(\bm{X}, \bm{\eta}, \bm{y}| \bm{\alpha})}{q(\bm{\eta})} \right) \\
& \ge \text{E}_{\eta} \left( -\log \P(\bm{X}, \bm{y}, \bm{\eta} | \bm{\alpha}) \right) - \text{E}_{\eta} \left(-\log q(\bm{\eta}) \right)
\end{flalign*} 
where $ q(\bm{\eta}) $ is an arbitrary distribution. A natural choice for the distribution is $ q(\bm{\eta}) = \P(\bm{\eta} | \bm{X}, \bm{y}, \bm{\alpha}^{t-1}) $ where $ \bm{\alpha}^{t-1} $ are the optimal Lagrange multipliers of the previous iteration. Since the second term $ \text{E}_{\eta} \left(\log \left( \P(\bm{\eta} | \bm{X}, \bm{y}, \bm{\alpha}^{t-1}) \right) \right)$ does not depend on $ \bm{\alpha} $, it can be dropped so the lower bound is proportional to just the first term $ \E_\eta \left( -\log \left( Z(\bm{\alpha}) \right) \right) = $
\begin{flalign*}
& \sum_{i=1}^l \alpha_i + \E_\eta \left( -\frac{1}{2} \bm{\alpha}^T \bm{Y J} \bm{H} ( 2\beta \bm{H}^T \mathcal{L}\bm{H} + \bm{K}^{-1})^{-1} \bm{H} \bm{J}^T \bm{Y \alpha} \right) \\
& + \sum_{i=1}^l \log(1-\alpha_i / C) \,\, \text{ s.t. } \sum_{i=1}^l \alpha_i y_i = 0, \alpha_1, \dots, \alpha_l \ge 0.
\end{flalign*}
By Jensen's inequality again and ($\bm{J H } = \bm{J}$), the quadratic term above has lower bound
\begin{flalign*}
& \ge -\frac{1}{2} \bm{\alpha}^T \bm{Y J} \left( \bm{K}^{-1} + 2 \E_\eta ( \bm{H}^T \mathcal{L}\bm{H}) \right)^{-1} \bm{J}^T \bm{Y \alpha} \\
& = -\frac{1}{2} \bm{\alpha}^T \bm{Y J} \bm{K} \left( \bm{I} + 2 \E_\eta ( \bm{H}^T \mathcal{L}\bm{H}) \bm{K} \right)^{-1} \bm{J}^T \bm{Y \alpha}.
\end{flalign*}

\end{proof}

\begin{proof}[Proof of Lemma \ref{GEM_est}]
Define $k_L$ and $k_{L_\eta}$ as the number of neighbors in the kNNG of the graph Laplacians $\mathcal{L}$ and $\mathcal{L}_\eta$. There are kNNG with at least $k_L $ neighbors in $ \bm{H}^T \mathcal{L}\bm{H} $, which is formed on all the data and then pruned to just contain just the anomalous nodes, that will contain the subgraph in $\bm{h}^T \mathcal{L}_\eta \bm{h} $, which is a kNNG of only anomalous nodes with $ k_{L_\eta} $ neighbors. This is true for any $\bm{\eta} $ or its estimators $\hat{\bm{\eta}} $. So, there exists some $m$ (defined as the first $m$ points of any anomalous point are also anomalous) and $k_L \ge k_{L_\eta} $ such that
\begin{flalign*}
& || \E( \bm{H}^T \mathcal{L}\bm{H} | \bm{X}, \bm{y}, \bm{\alpha}^{t-1} ) - \E( \bm{h}^T \mathcal{L}_\eta \bm{h} | \bm{X}, \bm{y}, \bm{\alpha}^{t-1} ) ||_F \le \delta(m) \\
& || \hat{\bm{h}}^T \hat{\mathcal{L}}_\eta \hat{\bm{h}} - \hat{\bm{H}}^T \mathcal{L}\hat{\bm{H}} ||_F \le \delta'(m) 
\end{flalign*}
with equality and $\delta(m) =\delta'(m) = 0$ when $ k_L = k_{L_\eta} = m$ because then the pruned graph is exactly the graph in $\mathcal{L}_\eta $.

And since the GEM principle described in Section \ref{GEM} gives a good estimator, then 
\begin{flalign*}
& || \hat{\bm{H}}^T \mathcal{L}\hat{\bm{H}} - \E( \bm{H}^T \mathcal{L}\bm{H} | \bm{X}, \bm{y}, \bm{\alpha}^{t-1} ) ||_F \le \zeta
\end{flalign*}
has small $\zeta$.
So if $m$ is sufficiently large relative to $k_{L_\eta}$ so that $\delta'(m)$ and $\delta(m)$ are small, then $\hat{\bm{h}}^T \hat{\mathcal{L}}_\eta \hat{\bm{h}} $ is a good estimator
\begin{flalign*}
& || \hat{\bm{h}}^T \hat{\mathcal{L}}_\eta \hat{\bm{h}} - \E( \bm{h}^T \mathcal{L}\bm{h} | \bm{X}, \bm{y}, \bm{\alpha}^{t-1} ) ||_F \le || \hat{\bm{h}}^T \hat{\mathcal{L}}_\eta \hat{\bm{h}} - \hat{\bm{H}}^T \mathcal{L}\hat{\bm{H}} ||_F \\
& + || \hat{\bm{H}}^T \mathcal{L}\hat{\bm{H}} - \E( \bm{H}^T \mathcal{L}\bm{H} | \bm{X}, \bm{y}, \bm{\alpha}^{t-1} ) ||_F \\
& + || \E( \bm{H}^T \mathcal{L}\bm{H} | \bm{X}, \bm{y}, \bm{\alpha}^{t-1} ) - \E( \bm{h}^T \mathcal{L}_\eta \bm{h} | \bm{X}, \bm{y}, \bm{\alpha}^{t-1} ) ||_F \\
& \le \delta'(m) + \zeta + \delta(m)
\end{flalign*}
 (by triangle inequality) because $ \delta'(m) + \zeta + \delta(m) $ is also small.

\end{proof}

\begin{proof}[Proof of Theorem \ref{EM_alg}]
By Jensen's inequality, the log observed posterior has tight lower bound,
\begin{flalign*}
& \log \left( \P(\bm{\theta}, b, \bm{\gamma}, \lambda | \bm{X}, \bm{y}) \right) \ge \log \left( \P_0(\bm{\theta}, b, \bm{\gamma}, \lambda) \right) \\
& \quad + \E_\eta \left( \log \left( \P(\bm{X}, \bm{\eta}, \bm{y}| \bm{\theta}, b, \bm{\gamma}, \lambda) \right) \right) - \E_\eta \left( \log \left( \P(\bm{X}, \bm{\eta} , \bm{y} | \bm{\alpha}) \right) \right) 
\end{flalign*}
where the expectation is with respect to $ \P(\bm{\eta} | \bm{X}, \bm{y}, \bm{\alpha}^{t-1}) $. When the posterior is the MED solution using constraints \eqref{decision} and \eqref{smooth_h}, maximizing the lower bound for $ \bm{\theta} $ gives the primal form for the M-step as the solution to derivative of the lower bound
\begin{flalign*}
\sum_{i=1}^l \alpha_i y_i \E_\eta(\eta) \bm{X}^T_i - \left( \bm{I} + 2 \beta \bm{X}^T \E_\eta( \bm{h}^T \mathcal{L}_\eta \bm{h})\bm{X} \right) \bm{\theta} = \bm{0}.
\end{flalign*}

Following the same procedure as Lemma \ref{lowerbound}, the dual form for the M-step has a lower bound with quadratic term
\begin{flalign*}
& -\frac{1}{2} \bm{\alpha}^T \bm{Y J} \bm{K} \left( \bm{I} + 2 \E_\eta ( \bm{h}^T \mathcal{L}_\eta \bm{h}) \bm{K} \right)^{-1} \bm{J}^T \bm{Y \alpha}. 
\end{flalign*}
So using the same block matrix inversion procedure as Lemma \ref{complete_post}, the dual objective for the M-step is
\begin{flalign*}
& \sum_{i=1}^l \alpha_i + \log \left(1-\frac{\alpha_i }{ C} \right) -\frac{1}{2} \bm{\alpha}^T \bm{Y J} \hat{\bm{K}}_\eta ( \bm{I} + 2\beta \hat{\mathcal{L}}_{\eta} \hat{\bm{K}}_\eta )^{-1} \bm{J}^T \bm{Y} \bm{\alpha} \\
& \text{ s.t. } \sum_{i=1}^l \alpha_i y_i = 0, \,\, \alpha_1, \dots, \alpha_l \ge 0
\end{flalign*}
where $ \hat{\mathcal{L}}_{\eta} $ is the Laplacian matrix on only the set of data points $ \{ \bm{X}_i : \hat{\eta_i } = 1 \} $ and $ \hat{\bm{K}}_\eta $ is the $ a \times a $ submatrix of these same data points.
\end{proof}

\begin{proof}[Proof of Corollary \ref{decision_rule}]
The primal dual relationship is $ \bm{X} \hat{\bm{\theta}} = k(\bm{X}, \bm{X}_{\hat{\eta}}) ( \bm{I} + 2\beta \hat{\mathcal{L}}_\eta \hat{\bm{K}}_\eta)^{-1} \bm{J}^T \bm{Y} \hat{\bm{\alpha}} $. So for any point $\bm{X}_{i'}$, the prediction is $ \hat{\eta}_{i'} \left( \bm{X}_{i'} \hat{\bm{\theta}} + \hat{b} \right) = $
\begin{flalign*} 
& \hat{\eta}_{i'} \left(k(\bm{X}_{i'}, \bm{X}_{\hat{\eta}}) ( \bm{I} + 2\beta \hat{\mathcal{L}}_\eta \hat{\bm{K}}_\eta)^{-1} \bm{J}^T \bm{Y} \hat{\bm{\alpha}} + \hat{b} \right).
\end{flalign*}
 Because all nominal points are low utility, for simplicity they will be given predicted label $-1$.
\end{proof}
	
\section*{Acknowledgment}
This work was partially supported by the Consortium for Verification Technology under Department of Energy National Nuclear Security Administration award number DE-NA0002534, partially by the University of Michigan ECE Departmental Fellowship, and partially by the Xerox University Grant.

\ifCLASSOPTIONcaptionsoff
 \newpage
\fi



\bibliographystyle{IEEEtran}
\bibliography{references}

\begin{thebibliography}{10}
\providecommand{\url}[1]{#1}
\csname url@samestyle\endcsname
\providecommand{\newblock}{\relax}
\providecommand{\bibinfo}[2]{#2}
\providecommand{\BIBentrySTDinterwordspacing}{\spaceskip=0pt\relax}
\providecommand{\BIBentryALTinterwordstretchfactor}{4}
\providecommand{\BIBentryALTinterwordspacing}{\spaceskip=\fontdimen2\font plus
\BIBentryALTinterwordstretchfactor\fontdimen3\font minus
  \fontdimen4\font\relax}
\providecommand{\BIBforeignlanguage}[2]{{%
\expandafter\ifx\csname l@#1\endcsname\relax
\typeout{** WARNING: IEEEtran.bst: No hyphenation pattern has been}%
\typeout{** loaded for the language `#1'. Using the pattern for}%
\typeout{** the default language instead.}%
\else
\language=\csname l@#1\endcsname
\fi
#2}}
\providecommand{\BIBdecl}{\relax}
\BIBdecl

\bibitem{NIPS1999_1723}
P.~B. Sch\"{o}lkopf, R.~C. Williamson, A.~J. Smola, J.~Shawe-Taylor, and J.~C.
  Platt, ``Support vector method for novelty detection,'' in \emph{Advances in
  Neural Information Processing Systems 12}, 2000, pp. 582--588.

\bibitem{gornitz2013toward}
N.~G{\"o}rnitz, M.~Kloft, K.~Rieck, and U.~Brefeld, ``Toward supervised anomaly
  detection.'' \emph{Journal of Artificial Intelligence Research (JAIR)},
  vol.~46, pp. 235--262, 2013.

\bibitem{NIPS2006_3145}
A.~O. Hero, ``Geometric entropy minimization (gem) for anomaly detection and
  localization,'' in \emph{Advances in Neural Information Processing Systems
  19}, 2007, pp. 585--592.

\bibitem{Scott:2006:LMV:1248547.1248571}
C.~Scott and R.~D. Nowak, ``Learning minimum volume sets,'' \emph{Journal of
  Machine Learning Research (JMLR)}, vol.~7, pp. 665--704, 2006.

\bibitem{NIPS2011_4287}
K.~Sricharan and A.~O. Hero, ``Efficient anomaly detection using bipartite k-nn
  graphs,'' in \emph{Advances in Neural Information Processing Systems 24},
  2011, pp. 478--486.

\bibitem{pelleg2004active}
D.~Pelleg and A.~W. Moore, ``Active learning for anomaly and rare-category
  detection,'' in \emph{Advances in Neural Information Processing Systems 17},
  2005, pp. 1073--1080.

\bibitem{stokes2008aladin}
J.~W. Stokes, J.~C. Platt, J.~Kravis, and M.~Shilman, ``Aladin: Active learning
  of anomalies to detect intrusions,'' \emph{Technique Report. Microsoft
  Network Security Redmond, WA}, vol. 98052, 2008.

\bibitem{breiman2001random}
L.~Breiman, ``Random forests,'' \emph{Machine Learning}, vol.~45, pp. 5--32,
  2001.

\bibitem{Belkin:2006:MRG:1248547.1248632}
M.~Belkin, P.~Niyogi, and V.~Sindhwani, ``Manifold regularization: A geometric
  framework for learning from labeled and unlabeled examples,'' \emph{Journal
  of Machine Learning Research (JMLR)}, vol.~7, pp. 2399--2434, 2006.

\bibitem{chapelle2005semi}
O.~Chapelle and A.~Zien, ``Semi-supervised classification by low density
  separation,'' in \emph{Tenth International Workshop on Artificial
  Intelligence and Statistics (AISTATS 2005)}, 2005, pp. 57--64.

\bibitem{jebara2004machine}
T.~Jebara, \emph{Machine Learning: Discriminative and Generative}.\hskip 1em
  plus 0.5em minus 0.4em\relax Springer Science \& Business Media, 2004.

\bibitem{veeramachaneni2016ai}
K.~Veeramachaneni, I.~Arnaldo, V.~Korrapati, C.~Bassias, and K.~Li, ``Ai\^{}2:
  Training a big data machine to defend,'' in \emph{2016 IEEE 2nd International
  Conference on Big Data Security on Cloud (BigDataSecurity), IEEE
  International Conference on High Performance and Smart Computing (HPSC), and
  IEEE International Conference on Intelligent Data and Security (IDS)}, 2016,
  pp. 49--54.

\bibitem{das2016incorporating}
S.~Das, W.~K. Wong, T.~Dietterich, A.~Fern, and A.~Emmott, ``Incorporating
  expert feedback into active anomaly discovery,'' in \emph{2016 IEEE 16th
  International Conference on Data Mining (ICDM)}, 2016, pp. 853--858.

\bibitem{5585637}
H.~Li and Z.~Han, ``Catch me if you can: An abnormality detection approach for
  collaborative spectrum sensing in cognitive radio networks,'' \emph{IEEE
  Transactions on Wireless Communications}, vol.~9, no.~11, pp. 3554--3565,
  2010.

\bibitem{NIPS1999_1733}
T.~Jaakkola, M.~Meila, and T.~Jebara, ``Maximum entropy discrimination,'' in
  \emph{Advances in Neural Information Processing Systems 12}, 2000, pp.
  470--476.

\bibitem{DBLP:conf/icassp/XieNH14}
T.~Xie, N.~M. Nasrabadi, and A.~O. Hero, ``Learning to classify with possible
  sensor failures,'' in \emph{2014 IEEE International Conference on Acoustics,
  Speech and Signal Processing (ICASSP)}, 2014, pp. 2395--2399.

\bibitem{zhu2011infinite}
J.~Zhu, N.~Chen, and E.~P. Xing, ``Infinite svm: a dirichlet process mixture of
  large-margin kernel machines,'' in \emph{28th International Conference on
  Machine Learning (ICML-11)}, 2011, pp. 617--624.

\bibitem{zhu2009maximum}
J.~Zhu and E.~P. Xing, ``Maximum entropy discrimination markov networks,''
  \emph{Journal of Machine Learning Research (JMLR)}, vol.~10, pp. 2531--2569,
  2009.

\bibitem{wahba1999support}
G.~Wahba, ``Support vector machines, reproducing kernel hilbert spaces and the
  randomized gacv,'' 1999, vol.~6, pp. 69--87.

\bibitem{jaakkola1999probabilistic}
T.~S. Jaakkola and D.~Haussler, ``Probabilistic kernel regression models,'' in
  \emph{Seventh International Workshop on Artificial Intelligence and
  Statistics (AISTATS 1999)}, 1999.

\bibitem{smola1998connection}
A.~J. Smola, B.~Sch\"{o}lkopf, and K.-R. M\"{u}ller, ``The connection between
  regularization operators and support vector kernels,'' \emph{Neural
  Networks}, vol.~11, pp. 637--649, 1998.

\bibitem{opperwinther}
M.~Opper and O.~Winther, ``Gaussian process classification and svm: Mean field
  results and leave-one-out estimator,'' 1999.

\bibitem{Sollich2002}
P.~Sollich, ``Bayesian methods for support vector machines: Evidence and
  predictive class probabilities,'' \emph{Machine Learning}, vol.~46, pp.
  21--52, 2002.

\bibitem{grigor2006heat}
A.~Grigor’yan, ``Heat kernels on weighted manifolds and applications,''
  \emph{Contemporary Mathematics}, vol. 398, pp. 93--191, 2006.

\bibitem{Sindhwani:2005:BPC:1102351.1102455}
V.~Sindhwani, P.~Niyogi, and M.~Belkin, ``Beyond the point cloud: from
  transductive to semi-supervised learning,'' in \emph{22nd International
  Conference on Machine Learning (ICML-05)}, 2005, pp. 824--831.

\bibitem{CC01a}
C.-C. Chang and C.-J. Lin, ``{LIBSVM}: A library for support vector machines,''
  \emph{ACM Transactions on Intelligent Systems and Technology}, vol.~2, pp.
  27:1--27:27, 2011, \url{http://www.csie.ntu.edu.tw/~cjlin/libsvm}.

\bibitem{gb08}
M.~Grant and S.~Boyd, ``Graph implementations for nonsmooth convex programs,''
  in \emph{Recent Advances in Learning and Control}.\hskip 1em plus 0.5em minus
  0.4em\relax Springer-Verlag Limited, 2008, pp. 95--110.

\bibitem{cvx}
------, ``Cvx: Matlab software for disciplined convex programming, version
  2.1,'' 2014.

\bibitem{libqp}
V.~Franc, ``Library for quadratic programming,'' 2009,
  \url{http://cmp.felk.cvut.cz/~xfrancv/pages/libqp.html}.

\bibitem{davis2006relationship}
J.~Davis and M.~Goadrich, ``The relationship between precision-recall and roc
  curves,'' in \emph{23th International Conference on Machine Learning
  (ICML-06)}, 2006, pp. 233--240.

\bibitem{Reddit}
Stuck\_In\_the\_Matrix, ``I have every publicly available reddit comment for
  research. $\sim$ 1.7 billion comments @ 250 gb compressed. any interest in
  this?'' Reddit, 2015, full Dataset:
  \url{https://www.reddit.com/r/datasets/comments/3bxlg7/i_have_every_publicly_available_reddit_comment},
  Kaggle: \url{https://www.kaggle.com/c/reddit-comments-may-2015}.

\bibitem{CTU-13}
S.~Garcia, M.~Grill, J.~Stiborek, and A.~Zunino, ``An empirical comparison of
  botnet detection methods,'' \emph{Computers \& Security}, vol.~45, pp.
  100--123, 2014,
  http://mcfp.weebly.com/the-ctu-13-dataset-a-labeled-dataset-with-botnet-normal-and-background-traffic.html.

\end{thebibliography}

%

%
%
%




\end{document}